\documentclass{article} 
\pdfoutput=1
\usepackage[dvipsnames,table,xcdraw]{xcolor}
\usepackage{iclr2022_conference,times}


\usepackage{amsmath,amsfonts,bm}









\def\eqref#1{equation~\ref{#1}}









\def\1{\bm{1}}










\DeclareMathAlphabet{\mathsfit}{\encodingdefault}{\sfdefault}{m}{sl}
\SetMathAlphabet{\mathsfit}{bold}{\encodingdefault}{\sfdefault}{bx}{n}













\usepackage{hyperref}
\usepackage{url}

\usepackage{microtype}
\usepackage{graphicx}
\usepackage{subfigure}
\usepackage{booktabs} 

\usepackage{amsthm}
\usepackage{mathtools}
\usepackage{balance}
\usepackage{cancel}
\usepackage[title]{appendix}

\usepackage{caption}
\usepackage{pifont}
\newcommand{\cmark}{\ding{51}}%
\newcommand{\xmark}{\ding{55}}%

\usepackage{amssymb}
\usepackage{amsmath}
\newtheorem{theorem}{Theorem}
\newtheorem{lemma}{Lemma}

\usepackage[cjk]{kotex}
\usepackage{algorithm} 
\usepackage{algorithmic}  
\usepackage[titlenumbered,ruled,algo2e]{algorithm2e}
\usepackage{multirow,multicol,ctable,tabularx} 
\allowdisplaybreaks
\usepackage{hyperref}

\usepackage{natbib}
\usepackage[resetlabels]{multibib}
\newcites{apndx}{References in Appendix}

\title{Stein Latent Optimization \\ for Generative Adversarial Networks}


\author{Uiwon Hwang$^1$, Heeseung Kim$^1$, Dahuin Jung$^1$, Hyemi Jang$^1$, Hyungyu Lee$^1$, Sungroh Yoon$^{1,2,}$\thanks{Corresponding author}\\
$^1$Department of Electrical and Computer Engineering, $^2$AIIS, ASRI, INMC, ISRC, NSI, and \\ Interdisciplinary Program in Artificial Intelligence, Seoul National University, Seoul 08826, Korea \\
\small{\texttt{\{uiwon.hwang,\:gmltmd789,\:annajung0625,\:wkdal9512,\:rucy74,\:sryoon\}@snu.ac.kr}}
}

%

\iclrfinalcopy 
\begin{document}

\maketitle

\begin{abstract}
Generative adversarial networks (GANs) with clustered latent spaces can perform conditional generation in a completely unsupervised manner. In the real world, the salient attributes of unlabeled data can be imbalanced. However, most of existing unsupervised conditional GANs cannot cluster attributes of these data in their latent spaces properly because they assume uniform distributions of the attributes. To address this problem, we theoretically derive Stein latent optimization that provides reparameterizable gradient estimations of the latent distribution parameters assuming a Gaussian mixture prior in a continuous latent space. Structurally, we introduce an encoder network and novel unsupervised conditional contrastive loss to ensure that data generated from a single mixture component represent a single attribute. We confirm that the proposed method, named Stein Latent Optimization for GANs (SLOGAN), successfully learns balanced or imbalanced attributes and achieves state-of-the-art unsupervised conditional generation performance even in the absence of attribute information (e.g., the imbalance ratio). Moreover, we demonstrate that the attributes to be learned can be manipulated using a small amount of probe data.
\end{abstract}

\section{Introduction}
GANs have shown remarkable results in the synthesis of realistic data conditioned on a specific class \citep{acgan, projection, contragan}. Training conditional GANs requires a massive amount of labeled data; however, data are often unlabeled or possess only a few labels. For unsupervised conditional generation, the salient attributes of the data are first identified by unsupervised learning and used for conditional generation of data. Recently, several unsupervised conditional GANs have been proposed \citep{infogan, clustergan, cdgan, pgmgan}. By maximizing a lower bound of mutual information between latent codes and generated data, they cluster the attributes of the underlying data distribution in their latent spaces. These GANs achieve satisfactory performance when the salient attributes of data are balanced.

However, the attributes of real-world data can be \textit{imbalanced}. For example, in the CelebA dataset \citep{celeba}, examples with one attribute (not wearing eyeglasses) outnumber the other attribute (wearing eyeglasses). Similarly, the number of examples with disease-related attributes in a biomedical dataset might be miniscule \citep{hexagan}. Thus, the imbalanced nature of real-world attributes must be considered for unsupervised conditional generation. Most of existing unsupervised conditional GANs are not suitable for real-world attributes, because they assume balanced attributes if the imbalance ratio is unknown \citep{infogan, clustergan, cdgan}. Examples where existing methods fail to learn imbalanced attributes are shown in Figure \ref{fig:toy} (a), (b) and (c).


\begin{figure}[t]
\centering
\includegraphics[width=0.99\linewidth]{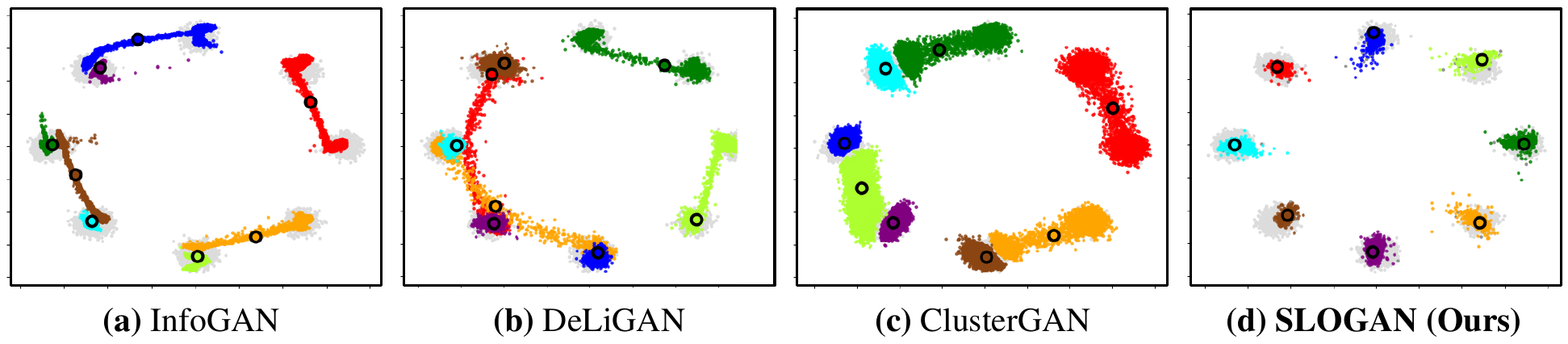} 
\caption{Unsupervised conditional generation on synthetic dataset. Dataset consists of eight two-dimensional Gaussians (gray dots), and the number of unlabeled data instances from each Gaussian distribution is imbalanced (clockwise from the top, imbalance ratio between the first four Gaussians and the remaining four is 1:3). It is considered that the instances sampled from the same Gaussian share an attribute. Dots with different colors denote the data generated from different latent codes. Bold circles represent the samples generated from the mean vectors of latent distributions.}
\label{fig:toy}
\end{figure}

In this paper, we propose unsupervised conditional GANs, referred to as Stein Latent Optimization for GANs (SLOGAN). We define the latent distribution of the GAN models as Gaussian mixtures to enable the imbalanced attributes to be naturally clustered in a continuous latent space. We derive reparameterizable gradient identities for the mean vectors, full covariance matrices, and mixing coefficients of the latent distribution using Stein's lemma. This enables stable learning and makes latent distribution parameters, including the mixing coefficient, learnable. We then devise a GAN framework with an encoder network and an unsupervised conditional contrastive loss (U2C loss), which can interact well with the learnable Gaussian mixture prior (Figure \ref{fig:overview}). This framework facilitates the association of data generated from a Gaussian component with a single attribute. 

For the synthetic dataset, our method (Figure \ref{fig:toy} (d)) shows superior performance on unsupervised conditional generation, with the accurately learned mixing coefficients. We performed experiments on various real-world datasets including MNIST \citep{mnist}, Fashion-MNIST \citep{fmnist}, CIFAR-10 \citep{cifar10}, CelebA \citep{celeba}, CelebA-HQ \citep{celeba-hq}, and AFHQ \citep{afhq} using architectures such as DCGAN \citep{dcgan}, ResGAN \citep{wgan-gp}, and StyleGAN2 \citep{stylegan2}. Through experiments, we verified that the proposed method outperforms existing unsupervised conditional GANs in unsupervised conditional generation on datasets with balanced or imbalanced attributes. Furthermore, we confirmed that we could control the attributes to be learned when a small set of probe data is provided.

The contributions of this work are summarized as follows:
\begin{itemize}
\item We propose novel Stein Latent Optimization for GANs (SLOGAN). To the best of our knowledge, this is one of the first methods that can perform unsupervised conditional generation by considering the imbalanced attributes of real-world data.
\item To enable this, we derive the implicit reparameterization for Gaussian mixture prior using Stein's lemma. Then, we devise a GAN framework with an encoder and an unsupervised conditional contrastive loss (U2C loss) suitable for implicit reparameterization.
\item SLOGAN significantly outperforms the existing methods on unsupervised learning tasks, such as cluster assignment, unconditional data generation, and unsupervised conditional generation, on datasets that include balanced or imbalanced attributes.
\end{itemize}

%

\section{Background}
\subsection{Generative Adversarial Networks}

\paragraph{Unsupervised conditional generation}
Several models including InfoGAN \citep{infogan}, ClusterGAN \citep{clustergan}, Self-conditioned GAN \citep{scgan}, CD-GAN \citep{cdgan}, and PGMGAN \citep{pgmgan} have been proposed to perform conditional generation in a completely unsupervised manner. However, these models primarily have two drawbacks: (1) Most of these methods embed the attributes in discrete variables, which induces discontinuity among the embedded attributes. (2) Most of them assume uniform distributions of the attributes, and thus fail to learn the imbalance in attributes when the imbalance ratio is not provided. In Appendix \ref{sec:related_apndx}, we discuss the above models in detail. By contrast, our work addresses the aforementioned limitations by combining GANs with the gradient estimation of the Gaussian mixture prior via Stein's lemma and representation learning on the latent space.

\paragraph{GANs with Gaussian mixture prior}
DeLiGAN \citep{deligan} is analogous to the proposed method, as it assumes a Gaussian mixture prior and learns the mean vectors and covariance matrices via the reparameterization trick. However, DeLiGAN assumes uniform mixing coefficients without updating them. As a result, it fails to perform unsupervised conditional generation on datasets with imbalanced attributes. In addition, it uses the explicit reparameterization trick, which inevitably suffers from high variance in the estimated gradients. This will be discussed further in Section \ref{sec:gradient}.

\begin{figure*}[t]
\centering
\includegraphics[width=0.99\textwidth]{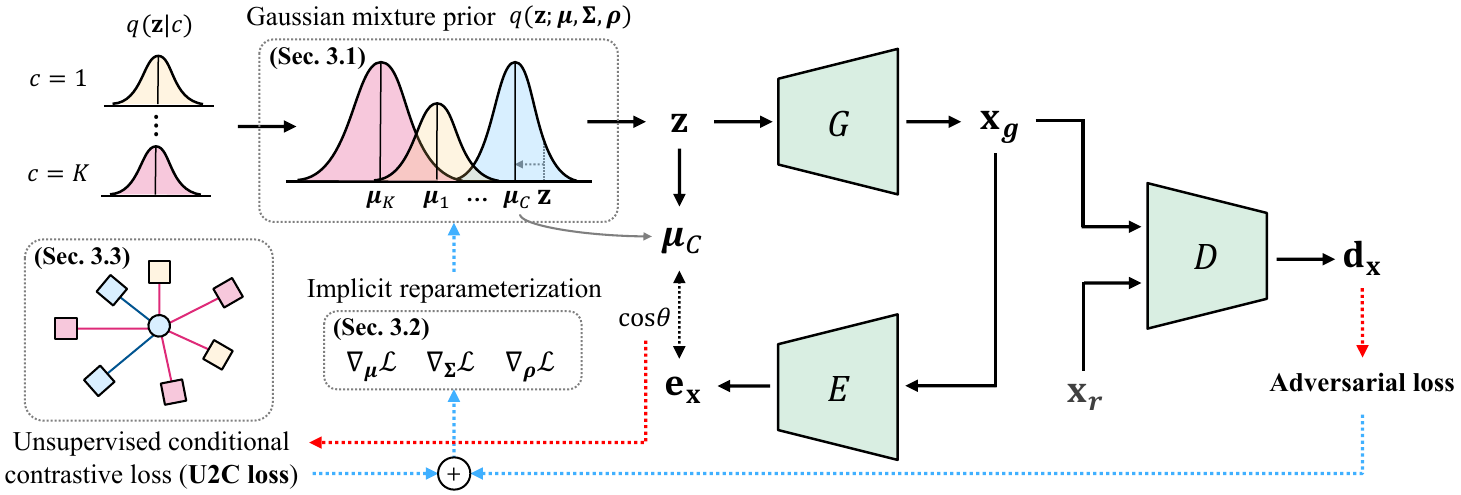} 
\caption{Overview of the SLOGAN model. Here, $\mathbf{x}_g$ denotes the data generated from a latent vector $\mathbf{z}$, $\mathbf{x}_r$ is a real data that is used for adversarial learning, and $C$ indicates a component ID of the Gaussian mixture prior with the highest responsibility $\operatorname*{argmax}_{c} q(c|\mathbf{z})$.}
\label{fig:overview}
\end{figure*}

\subsection{Contrastive Learning}
Contrastive learning aims to learn representations by contrasting neighboring with non-neighboring instances \citep{hadsell2006dimensionality}. In general, contrastive loss is defined as a critic function that approximates the log density ratio $\log p(y|x) / p(y)$ of two random variables $X$ and $Y$. By minimizing the loss, the lower bound of the mutual information $I(X;Y)$ is approximately maximized \citep{poole2019variational}. Several studies have shown that contrastive losses are advantageous for the representation learning of imbalanced data \citep{contrastive_imbalance, contrastive_decoupling, contrastive_bootstrap}. Motivated by these observations, we propose a contrastive loss that cooperates with a learnable latent distribution.

\subsection{Gradient Estimation for Gaussian Mixture} \label{sec:gradient}
\paragraph{Stein's lemma}
Stein's lemma provides a first-order gradient identity for a multivariate Gaussian distribution. The univariate case of Stein's lemma can be described as follows: 
\begin{lemma} \label{le:stein}
    Let function $h(\cdot): \mathbb{R} \mapsto \mathbb{R}$ be continuously differentiable. $q(z)$ is a univariate Gaussian distribution parameterized by the mean $\mu$ and variance $\sigma$. Then, the following identity holds:
    \begin{gather}
        \mathbb{E}_{q(z)} \left[ \sigma^{-1}(z-\mu)h(z) \right] = \mathbb{E}_{q(z)} \left[ \nabla_z h(z) \right]
    \end{gather}
\end{lemma}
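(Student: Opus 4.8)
The plan is to prove the identity by integration by parts, exploiting the fact that the score function of a Gaussian is linear in $z$. First I would write the density explicitly as $q(z) = (2\pi\sigma)^{-1/2}\exp(-(z-\mu)^2/(2\sigma))$, reading $\sigma$ as the \emph{variance} so that the factor $\sigma^{-1}$ in the statement matches the standard form of the lemma. Differentiating gives the key identity
\[
\nabla_z q(z) = -\sigma^{-1}(z-\mu)\,q(z),
\]
so that the quantity $\sigma^{-1}(z-\mu)\,q(z)$ appearing inside the left-hand expectation is exactly $-\nabla_z q(z)$. This is the structural observation that makes the whole computation work: the prefactor $\sigma^{-1}(z-\mu)$ is nothing but the negative logarithmic derivative of the density.

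Next I would substitute this into the left-hand side, expressing the expectation as an integral against $q$:
\[
\mathbb{E}_{q(z)}\!\left[\sigma^{-1}(z-\mu)h(z)\right] = \int_{-\infty}^{\infty} h(z)\,\sigma^{-1}(z-\mu)\,q(z)\,dz = -\int_{-\infty}^{\infty} h(z)\,\nabla_z q(z)\,dz.
\]
Integration by parts then transfers the derivative from $q$ onto $h$, producing a boundary term $-\bigl[h(z)q(z)\bigr]_{-\infty}^{\infty}$ together with $\int \nabla_z h(z)\,q(z)\,dz$, and the latter integral is precisely $\mathbb{E}_{q(z)}[\nabla_z h(z)]$, the desired right-hand side.

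The main obstacle, and essentially the only nontrivial point, will be justifying that the boundary term $\bigl[h(z)q(z)\bigr]_{-\infty}^{\infty}$ vanishes. Continuous differentiability of $h$ alone does not guarantee this; one needs a growth/integrability condition ensuring both that $h(z)q(z)\to 0$ as $|z|\to\infty$ and that the two expectations are finite. The natural hypothesis, implicit in the applications of the lemma, is that $h$ grows at most polynomially (or more generally that $\mathbb{E}_{q(z)}[|\nabla_z h(z)|]<\infty$ and $\mathbb{E}_{q(z)}[|(z-\mu)h(z)|]<\infty$). Under such a condition the Gaussian tail $q(z)$, which decays faster than any polynomial, dominates $h(z)$ at infinity, so the boundary contribution is zero and the argument closes. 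I would state this regularity assumption explicitly; the remaining manipulations are routine.
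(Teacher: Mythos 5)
Your integration-by-parts argument is correct, but note that the paper itself never proves Lemma~1: it is quoted as classical background (Stein's lemma, with Bonnet's and Price's theorems as corollaries), and the only proofs the paper supplies are for the mixture-level identities in Theorems~1--4, whose Gaussian-component ingredients (the multivariate Bonnet and Price identities, Lemmas~2 and~3 of the appendix) are in turn cited to Theorems~3 and~4 of \citet{lin2019stein}. So there is no in-paper proof to diverge from, and your route is the canonical one; you also correctly read the paper's slightly unusual convention that $\sigma$ denotes the variance, which is what makes the prefactor $\sigma^{-1}(z-\mu)$ the negative score $-\nabla_z \log q(z)$. Two remarks on the one delicate point, the boundary term. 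First, you are right that continuous differentiability alone does not suffice and that the paper's statement is silently incomplete; the standard hypothesis is $\mathbb{E}_{q(z)}[|\nabla_z h(z)|]<\infty$. Second, the pair of conditions you propose does close the gap, but it deserves the one-line argument you leave implicit: integrability of $(hq)' = q\,\nabla_z h - \sigma^{-1}(z-\mu)h q$ implies that $h(z)q(z)$ has limits as $z\to\pm\infty$, and a nonzero limit would force $|(z-\mu)h(z)q(z)|$ to grow linearly, contradicting $\mathbb{E}_{q(z)}[|(z-\mu)h(z)|]<\infty$; hence the boundary contribution vanishes. Alternatively, the classical proof sidesteps boundary terms entirely: write $h(z)-h(\mu)=\int_{\mu}^{z}\nabla_t h(t)\,dt$, apply Fubini against $(z-\mu)q(z)$ using $\int_t^{\infty}(z-\mu)q(z)\,dz=\sigma q(t)$ for $t\ge\mu$ (and its mirror image for $t<\mu$), and conclude under the single hypothesis $\mathbb{E}_{q(z)}[|\nabla_z h(z)|]<\infty$, which is marginally more general than imposing a polynomial-growth condition on $h$.
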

\citet{lin2019stein} generalized Stein's lemma to exponential family mixtures and linked it to the implicit reparameterization trick. Stein's lemma has been applied to various fields of deep learning, including Bayesian deep learning \citep{lin2019fast} and adversarial robustness \citep{smoothed}. To the best of our knowledge, our work is the first to apply Stein's lemma to GANs.

\paragraph{Reparameterization trick}
A simple method to estimate gradients of the parameters of Gaussian mixtures is explicit reparameterization, used in DeLiGAN. When the $c$-th component is selected according to the mixing coefficient $p(c)$, the latent variable is calculated as follows: $\mathbf{z}=\boldsymbol{\mu}_c + \boldsymbol{\epsilon} \cdot \boldsymbol{\Sigma}_c^{1/2}$, where $\boldsymbol{\epsilon} \sim \mathcal{N}(\mathbf{0}, \mathbf{I})$. Derivatives of a loss function $\frac{\partial \mathcal{L}(z)}{\partial \boldsymbol{\mu}}$ and $\frac{\partial \mathcal{L}(z)}{\partial \boldsymbol{\Sigma}}$ only update the mean and covariance matrices of the \textit{selected} ($c$-th) component $\boldsymbol{\mu}_c$ and $\boldsymbol{\Sigma}_c$, respectively. Gradient estimation using explicit reparameterization is unbiased; however, it has a distinctly high variance. For a single latent vector $\mathbf{z}$, the implicit reparameterization trick \citep{implicit} updates the parameters of \textit{all} the latent components. Gradient estimation using implicit reparameterization is unbiased and has a lower variance, which enables a more stable and faster convergence of the model. The gradients for the parameters of the Gaussian mixture prior in our method are implicitly reparameterizable. 

\section{Proposed Method}
In the following sections, we propose Stein Latent Optimization for GANs (SLOGAN). We assume a Gaussian mixture prior (Section \ref{sec:3.1}), derive implicit reparameterization of the parameters of the mixture prior (Section \ref{sec:3.2}), and construct a GAN framework with U2C loss (Section \ref{sec:3.3}). Additionally, we devise a method to manipulate attributes to be learned if necessary (Section \ref{sec:3.5}). An overview of SLOGAN is shown in Figure \ref{fig:overview}.

\subsection{Gaussian mixture Prior} \label{sec:3.1}
We consider a GAN with a generator $G: \mathbb{R}^{d_z} \mapsto \mathbb{R}^{d_x}$ and a discriminator $D: \mathbb{R}^{d_x} \mapsto \mathbb{R}$, where $d_z$ and $d_x$ are the dimensions of latent and data spaces, respectively. In the latent space $\mathcal{Z} \in \mathbb{R}^{d_z}$, we consider a conditional latent distribution $q(\mathbf{z}|c)=\mathcal{N}(\mathbf{z}; \boldsymbol{\mu}_c, \Sigma_c)$, $c=1, ..., K$, where $K$ is the number of components we initially set and $\boldsymbol{\mu}_c$, $\Sigma_c$ are the mean vector and covariance matrix of the $c$-th component, respectively. Subsequently, we consider a Gaussian mixture $q(\mathbf{z})=\sum_{c=1}^{K} p(c) q(\mathbf{z}|c)$ parameterized by $\boldsymbol{\mu}=\{ \boldsymbol{\mu}_c \}_{c=1}^K$, $\boldsymbol{\Sigma}=\{ \Sigma_c \}_{c=1}^K$ and $\boldsymbol{\pi}=\{ \pi_c \}_{c=1}^K=\{ p(c) \}_{c=1}^K$ as the prior. 

We hypothesize that a mixture prior in a continuous space could model some continuous attributes of real-world data (e.g., hair color) more naturally than categorical priors which could introduce discontinuity \citep{clustergan}. Because we use implicit reparameterization of a mixture of Gaussian priors (derived in Section \ref{sec:3.2}), SLOGAN can fully benefit from implicit reparameterization and U2C loss. By contrast, the implicit reparameterization of prior distributions that do not belong to the exponential family (e.g., categorical priors) remains an open question.

In the experiments, the elements of $\boldsymbol{\mu}_c$ were sampled from $\mathcal{N}(0,0.1)$, and we selected $\Sigma_c=I$ and $\pi_c=1/K$ as the initial values. For the convenience of notation, we define the latent distribution $q=q(\mathbf{z})$, the mixing coefficient $\pi_c=p(c)$, and $\boldsymbol{\delta}(\mathbf{z})= \{\delta(\mathbf{z})_c\}_{c=1}^K$, where $\delta(\mathbf{z})_c=q(\mathbf{z}|c)/q(\mathbf{z})$.
$q(c|\mathbf{z})$, the responsibility of component $c$ for a latent vector $\mathbf{z}$, can be expressed as follows:
\begin{gather}
    q(c|\mathbf{z}) = \frac{q(c,\mathbf{z})}{q(\mathbf{z})} = \frac{q(\mathbf{z}|c)p(c)}{q(\mathbf{z})} = \delta(\mathbf{z})_{c} \pi_{c} \label{eq:resp}
\end{gather}

\subsection{Gradient Identities} \label{sec:3.2}
We present gradient identities for the latent distribution parameters. To derive the identities, we use the generalized Stein's lemma for Gaussian mixtures with full covariance matrices \citep{lin2019stein}. First, we derive a gradient identity for the mean vector using Bonnet's theorem \citep{bonnet}.
\begin{theorem} \label{th:bonnet}
    Given an expected loss of the generator $\mathcal{L}$ and a loss function for a sample $\ell(\cdot): \mathbb{R}^{d_z} \mapsto \mathbb{R}$, we assume $\ell$ to be continuously differentiable. Then, the following identity holds:
    \begin{gather}
        \nabla_{\boldsymbol{\mu}_c} \mathcal{L} =  \mathbb{E}_{q} \left[ \delta(\mathbf{z})_c \pi_c \nabla_{\mathbf{z}} \ell(\mathbf{z}) \right] \label{eq:bonnet}
    \end{gather}
\end{theorem}
Proof of Theorem \ref{th:bonnet} is given in Appendix \ref{sec:bonnet_apndx}.

We derive a gradient identity for the covariance matrix via Price's theorem \citep{price}. Among the two versions of the Price's theorem, we use the first-order identity to minimize computational cost. \begin{theorem} \label{th:price}
    With the same assumptions as in Theorem \ref{th:bonnet}, the following gradient identity holds:
    \begin{gather}
        \nabla_{\Sigma_c} \mathcal{L} = \frac{1}{2} \mathbb{E}_{q} \left[ \delta(\mathbf{z})_c \pi_c \Sigma_c^{-1} \left( \mathbf{z}-\boldsymbol{\mu}_c \right) \nabla_{\mathbf{z}}^{T} \ell(\mathbf{z}) \right] \label{eq:price}
    \end{gather}
\end{theorem}
Proof of Theorem \ref{th:price} is given in Appendix \ref{sec:price_apndx}. In the implementation, we replaced the expectation of the right-hand side of Equation \ref{eq:price} with the average for a batch of latent vectors; hence, the updated $\Sigma_{c}$ may not be symmetric or positive-definite. To force a valid covariance matrix, we modify the updates of the covariance matrix as follows:
\begin{gather}
    \Delta\Sigma_c = -\nabla_{\Sigma_c} \mathcal{L} = - \frac{1}{2} \mathbb{E}_{q} \left[ \frac{1}{2} \left( S_{\mathbf{z}} + S_{\mathbf{z}}^{T} \right) \right] \label{eq:sym}\\
    \Delta\Sigma_{c}' = \Delta\Sigma_c + \frac{\gamma}{2} \Delta\Sigma_c \Sigma_c^{-1} \Delta\Sigma_c \label{eq:psd}
\end{gather}
where $S_{\mathbf{z}} = \delta(\mathbf{z})_c \pi_c \Sigma_c^{-1} \left( \mathbf{z}-\boldsymbol{\mu}_c \right) \nabla_{\mathbf{z}}^{T} \ell(\mathbf{z})$, and $\gamma$ denotes the learning rate for $\Sigma_c$. Equation \ref{eq:sym} holds as $\Delta\Sigma_c = \frac{1}{2} E_q \left[ S_{\mathbf{z}} \right] = \frac{1}{2} E_q \left[ S_{\mathbf{z}}^{T} \right]$. Motivated by \citet{pmlr-v119-lin20d}, Equation \ref{eq:psd} ensures the positive-definiteness of the covariance matrix, which is proved by Theorem \ref{th:psd}.
\begin{theorem} \label{th:psd}
    The updated covariance matrix $\Sigma_{c}'=\Sigma_{c}+\gamma\Delta\Sigma_{c}'$ with the modified update rule specified in Equation \ref{eq:psd} is positive-definite if $\Sigma_{c}$ is positive-definite.
\end{theorem}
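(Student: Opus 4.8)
The plan is to reduce the matrix positive-definiteness question to an elementary scalar inequality. First I would substitute the modified update of Equation \ref{eq:psd} into $\Sigma_c' = \Sigma_c + \gamma\Delta\Sigma_c'$ to obtain the explicit expression
\begin{gather}
\Sigma_c' = \Sigma_c + \gamma\Delta\Sigma_c + \frac{\gamma^2}{2}\Delta\Sigma_c\Sigma_c^{-1}\Delta\Sigma_c .
\end{gather}
Since $\Sigma_c$ is symmetric and $\Delta\Sigma_c$ was symmetrized in Equation \ref{eq:sym}, the product $\Delta\Sigma_c\Sigma_c^{-1}\Delta\Sigma_c$ is also symmetric, so $\Sigma_c'$ is symmetric. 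It therefore suffices to show $\mathbf{v}^T\Sigma_c'\mathbf{v} > 0$ for every nonzero $\mathbf{v}$.

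Next I would fix such a $\mathbf{v}$ and read off the quadratic form as a scalar quadratic in the step size $\gamma$,
\begin{gather}
\mathbf{v}^T\Sigma_c'\mathbf{v} = a + \gamma b + \frac{\gamma^2}{2}c,
\end{gather}
where $a = \mathbf{v}^T\Sigma_c\mathbf{v}$, $b = \mathbf{v}^T\Delta\Sigma_c\mathbf{v}$, and $c = \mathbf{v}^T\Delta\Sigma_c\Sigma_c^{-1}\Delta\Sigma_c\mathbf{v}$. Because $\Sigma_c \succ 0$ we have $a > 0$, and because $\Sigma_c^{-1} \succ 0$ we have $c \ge 0$. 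The goal becomes showing this parabola in $\gamma$ is strictly positive for all $\gamma$.

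The crux is a Cauchy--Schwarz estimate tying $b$ to $a$ and $c$. Writing $b = (\Sigma_c^{1/2}\mathbf{v})^T(\Sigma_c^{-1/2}\Delta\Sigma_c\mathbf{v})$ and applying Cauchy--Schwarz in the Euclidean inner product gives $b^2 \le \|\Sigma_c^{1/2}\mathbf{v}\|^2\,\|\Sigma_c^{-1/2}\Delta\Sigma_c\mathbf{v}\|^2 = a\,c$. When $c > 0$, this yields the discriminant bound $b^2 - 2ac \le ac - 2ac = -ac < 0$, so the parabola has no real root and, with positive leading coefficient $c/2$, stays strictly positive; hence $\mathbf{v}^T\Sigma_c'\mathbf{v} > 0$. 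The remaining degenerate case $c = 0$ forces $\Sigma_c^{-1/2}\Delta\Sigma_c\mathbf{v} = 0$, i.e. $\Delta\Sigma_c\mathbf{v} = 0$, whence $b = \mathbf{v}^T\Delta\Sigma_c\mathbf{v} = 0$ and $\mathbf{v}^T\Sigma_c'\mathbf{v} = a > 0$.

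I expect the main obstacle to be recognizing the right inner-product decomposition of $b$ so that Cauchy--Schwarz produces exactly the bound $b^2 \le ac$ needed to beat the factor $2$ sitting in the discriminant, together with cleanly disposing of the $c = 0$ boundary case where the quadratic degenerates to a constant. Everything else is routine substitution and symmetry bookkeeping. Note that the argument places no restriction on the sign or magnitude of $\gamma$, so positive-definiteness is preserved for any learning rate.
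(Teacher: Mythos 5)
Your proof is correct, and it reaches the conclusion by a genuinely different route from the paper's. The paper disposes of the theorem with a single matrix identity: taking a Cholesky factorization $\Sigma_c = \mathrm{L}\mathrm{L}^T$, it completes the square to write $\Sigma_c' = \frac{1}{2}\left(\Sigma_c + \mathrm{U}^T\mathrm{U}\right)$ with $\mathrm{U} = \mathrm{L}^T + \gamma\mathrm{L}^{-1}\Delta\Sigma_c$, whence positive-definiteness is immediate. You instead argue per test vector, reading $\mathbf{v}^T\Sigma_c'\mathbf{v}$ as the quadratic $a + \gamma b + \frac{\gamma^2}{2}c$ in $\gamma$ and killing its discriminant with the Cauchy--Schwarz bound $b^2 \le ac$. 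The two arguments are in fact dual: expanding $\|\mathrm{L}^T\mathbf{v} + \gamma\mathrm{L}^{-1}\Delta\Sigma_c\mathbf{v}\|^2 \ge 0$ for all $\gamma$ is precisely the nonnegative-quadratic proof of your inequality $b^2 \le ac$, so your Cauchy--Schwarz step is the scalar shadow of the paper's completed square. What the paper's version buys is brevity and an explicit decomposition of the updated matrix; what yours buys is a slightly more careful logical finish --- the paper asserts that $\mathrm{U}^T\mathrm{U}$ is positive-definite, which holds only when $\mathrm{U}$ is nonsingular (in general it is merely $\succeq 0$, which still suffices since $\Sigma_c \succ 0$ carries the strictness), whereas your explicit handling of the degenerate case $c = 0$ (which forces $\Delta\Sigma_c\mathbf{v} = \mathbf{0}$ and hence $b = 0$) avoids any such slip. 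Your closing observation that neither the sign nor the magnitude of $\gamma$ is restricted applies equally to the paper's factorization, since $\mathrm{U}^T\mathrm{U} \succeq 0$ for every real $\gamma$.
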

Proof of Theorem \ref{th:psd} is provided in Appendix \ref{sec:psd_apndx}.

We introduce a mixing coefficient parameter $\rho_c$, which is updated instead of the mixing coefficient $\pi_c$, to guarantee that the updated mixing coefficients are non-negative and summed to one. $\pi_c$ can be calculated using the softmax function (i.e., $\pi_c=\exp(\rho_c) / \sum_{i=1}^K \exp(\rho_i)$). We can then derive the gradient identity for the mixing coefficient parameter as follows:
\begin{theorem} \label{th:rho}
    Let $\rho_c$ be a mixing coefficient parameter. Then, the following gradient identity holds:
    \begin{gather}
        \nabla_{\rho_c} \mathcal{L} = \mathbb{E}_{q} \left[ \pi_c \left(\delta(\mathbf{z})_c - 1\right) \ell(\mathbf{z}) \right] \label{eq:rho}
    \end{gather}
\end{theorem}
Proof of Theorem \ref{th:rho} is given in Appendix \ref{sec:rho_apndx}. Because the gradients of the latent vector with respect to the latent parameters are computed by implicit differentiation via Stein's lemma, we obtain the implicit reparameterization gradients introduced by \citet{implicit}.

\begin{figure}[tp]
\vspace*{-\baselineskip}
\begin{minipage}[t]{0.57\linewidth}
\begin{algorithm}[H]
\footnotesize
    \caption{Training procedure of SLOGAN}
    \label{alg:slogan}
\begin{algorithmic}
    \STATE Initialize $\boldsymbol{\mu}$, $\boldsymbol{\Sigma}$, $\boldsymbol{\rho}$, parameters of $D$, $G$, and $E$
    \WHILE{training loss is not converged}
    \STATE Sample a batch of data $\{\mathbf{x}^i\}_{i=1}^B$ $\sim$ $p(\mathbf{x})$ 
    \STATE Sample a batch of latent vectors $\{\mathbf{z}^i\}_{i=1}^B$ $\sim$ $q(\mathbf{z})$
    \FOR{$i=1,...,B$}
    \STATE Calculate $\ell_\mathrm{adv}(\mathbf{z}^i)$ and $\ell_\mathrm{U2C}(\mathbf{z}^i)$ for a latent vector $\mathbf{z}^i$
    \STATE $S_{\mathbf{z}^i} \leftarrow {\scriptstyle \delta(\mathbf{z}^i)_c \pi_c \Sigma_c^{-1} \left( \mathbf{z}^i - \boldsymbol{\mu}_c \right) \nabla_{\mathbf{z}^i}^{T} \left( \ell_\mathrm{adv} (\mathbf{z}^i) + \lambda \ell_\mathrm{U2C} (\mathbf{z}^i) \right)}$ 
    \ENDFOR
    \FOR{$c=1,...,K$}
    \STATE Update {\scriptsize $\boldsymbol{\mu}_c$, $\boldsymbol{\Sigma}_c$} and {\scriptsize $\boldsymbol{\rho}_c$} via stochastic gradient estimation
    \STATE $~~~\boldsymbol{\mu}_c \leftarrow {\scriptstyle \boldsymbol{\mu}_c - \gamma \frac{1}{B} \sum_{i=1}^B \delta(\mathbf{z}^i)_c \pi_c \nabla_{\mathbf{z}^i} \left( \ell_\mathrm{adv} (\mathbf{z}^i) + \lambda \ell_\mathrm{U2C} (\mathbf{z}^i) \right)}$
    \STATE $~~~\Delta \Sigma_c \leftarrow -\frac{1}{4B} \sum_{i=1}^B \left( S_{\mathbf{z}^i} + S_{\mathbf{z}^i}^{T} \right)$
    \STATE $~~~\Sigma_c \leftarrow \Sigma_c + \gamma \left( \Delta\Sigma_c + \frac{\gamma}{2} \Delta\Sigma_c \Sigma_c^{-1} \Delta\Sigma_c \right)$
    \STATE $~~~\rho_c \leftarrow \rho_c - \gamma \frac{1}{B} \sum_{i=1}^B \pi_c \left(\delta(\mathbf{z}^i)_c - 1\right)\ell_\mathrm{adv} (\mathbf{z}^i)$
    \ENDFOR
    \STATE Update $G$, $E$ and $D$ using SGD
    \STATE $~~~\nabla_{G,E} \frac{1}{B} \sum_{i=1}^B \left( \ell_\mathrm{adv} (\mathbf{z}^i) + \lambda \ell_\mathrm{U2C} (\mathbf{z}^i) \right)$
    \STATE $~~~\nabla_{D} \left( - \frac{1}{B} \sum_{i=1}^B \ell_\mathrm{adv} (\mathbf{z}^i) - \frac{1}{B} \sum_{i=1}^B D(\mathbf{x}^i) \right)$
    \ENDWHILE
\end{algorithmic}
\end{algorithm}
\end{minipage}
\hfill
\begin{minipage}[t]{0.41\linewidth}
\begin{algorithm}[H]
\footnotesize
    \SetKwInOut{Input}{input}
    \SetKwInOut{Output}{output}
    \Input{$\{\{\mathbf{x}^i_y\}_{i=1}^N\}_{y=1}^K$ - Data sampled from $p(\mathbf{x}|y)$ for $y=1, ... ,K$; 
    \newline $\{\{\mathbf{z}^i_c\}_{i=1}^N\}_{c=1}^K$ - Latent vectors sampled from $q(\mathbf{z}|c)$ for $c=1, ... ,K$}
    \Output{ICFID - Intra-cluster FID;
    \newline $Y_c$ - Class-cluster assignments}
    \caption{Intra-cluster FID}
    \label{alg:icfid}
\begin{algorithmic}
    \STATE $Y\leftarrow\{1,...,K\}$
    \STATE $C\leftarrow\{1,...,K\}$
    \FOR{each class $y$ in $Y$}
    \STATE $\mathbf{X}_r \leftarrow \{\mathbf{x}^i_y\}_{i=1}^N$
    \FOR{each cluster $c$ in $C$}
    \STATE $\mathbf{X}_g \leftarrow \{\mathbf{x}^i_c\}_{i=1}^N$
    \STATE $d(y, c) \leftarrow$ FID$(\mathbf{X}_r, \mathbf{X}_g)$
    \ENDFOR
    \STATE $c^* \leftarrow \operatorname*{argmin}_{c \in C} d(y, c)$
    \STATE ICFID$(y) \leftarrow d(y, c^*)$
    \STATE $Y_c(y) \leftarrow c^*$
    \STATE Remove $c^*$ from $C$
    \ENDFOR
    \STATE ICFID $\leftarrow \frac{1}{K} \sum_{y=1}^{K} \textsc{ICFID}(y)$
\end{algorithmic}
\end{algorithm}
\end{minipage}
\end{figure}

\subsection{Contrastive Learning} \label{sec:3.3}
We introduce new unsupervised conditional contrastive loss (U2C loss) to learn salient attributes from data and to facilitate unsupervised conditional generation. We consider a batch of latent vectors $\{\mathbf{z}^i\}_{i=1}^{B} \sim q(\mathbf{z})$, where $B$ is the batch size. Generator $G$ receives the $i$-th latent vector $\mathbf{z}^i$ and generates data $\mathbf{x}^{i}_{g}=G(\mathbf{z}^{i})$. The adversarial loss for $G$ with respect to the sample $\mathbf{z}^i$ is as follows:
\begin{gather}
    \ell_\mathrm{adv}(\mathbf{z}^i) = -D(G(\mathbf{z}^i))
\end{gather}
We also introduce an encoder network $E$ to implement U2C loss. The synthesized data $\mathbf{x}^{i}_{g}$ enters $E$, and $E$ generates an encoded vector $\mathbf{e}^{i}_\mathbf{x}=E(\mathbf{x}^{i}_{g})$. Then, we find the mean vector $\boldsymbol{\mu}^{i}_{C}$, where $C$ is the component ID with the highest responsibility $q(c|\mathbf{z}^{i})$. We calculate $C$ first because a generated sample should have the attribute of the most responsible component among multiple components in the continuous space. Second, to update the parameters of the prior using implicit reparameterization, the loss should be a function of a latent vector $\mathbf{z}^i$, as proved in Theorems \ref{th:bonnet}, \ref{th:price}, and \ref{th:rho}. The component ID for each sample is calculated as follows:
\begin{gather}
    C^{i} = \operatorname*{argmax}_{c} {q(c|\mathbf{z}^{i})} = \operatorname*{argmax}_{c} {\delta(\mathbf{z}^{i})_{c} \pi_{c}}
\end{gather}
where $q(c|\mathbf{z}^i)=\delta(\mathbf{z}^i)_c \pi_c$ is derived from Equation \ref{eq:resp}. To satisfy the assumption of the continuously differentiable loss function in Theorems \ref{th:bonnet} and \ref{th:price}, we adopt the Gumbel-Softmax relaxation \citep{gumbel}, instead of the $\operatorname*{argmax}$ function. We use $\boldsymbol{\mu}^{i}_\mathbf{C} = \sum_{c=1}^K \mathbf{C}^i_c \boldsymbol{\mu}_c$ to calculate U2C loss to ensure that the loss function is continuously differentiable with respect to $\mathbf{z}^i$, where $\mathbf{C}^i = \operatorname*{Gumbel-Softmax}_\tau (\boldsymbol{\delta}(\mathbf{z}^i) \boldsymbol{\pi})$ and $\tau=0.01$. We derive U2C loss as follows:
\begin{gather}
    \ell_\mathrm{U2C} (\mathbf{z}^i) = - \log \frac{\exp(\cos \theta_{ii})}{\frac{1}{B} \sum_{j=1}^{B} \exp(\cos \theta_{ij})} \label{eq:contrastive}
\end{gather}
where we select the cosine similarity between $\mathbf{e}^{i}_\mathbf{x}$ and $\boldsymbol{\mu}^{j}_\mathbf{C}$, $\cos \theta_{ij} = \mathbf{e}^{i}_\mathbf{x} \cdot \boldsymbol{\mu}^{j}_\mathbf{C} / \|\mathbf{e}^{i}_\mathbf{x}\| \|\boldsymbol{\mu}^{j}_\mathbf{C}\|$ as the critic function that approximates the log density ratio $\log p(C^j|\mathbf{x}_g^i) / p(C^j)$ for contrastive learning. Given a test data, the probability for each cluster can be calculated using the assumption of the critic function, which enables us to assign a cluster for the data. Cluster assignment is described in Appendix \ref{sec:cluster_assignment_apndx}.

Intuitively, a mean vector $\boldsymbol{\mu}^{i}_\mathbf{C}$ of a latent mixture component is regarded as a prototype of each attribute. U2C loss encourages the encoded vector $\mathbf{e}^{i}_\mathbf{x}$ of the generated sample to be similar to its assigned low-dimensional prototype $\boldsymbol{\mu}^{i}_\mathbf{C}$ in the latent space. This allows each salient attribute clusters in the latent space, and each component of the learned latent distribution is responsible for a certain attribute of the data. If $\cos \theta_{ii}$ is proportional to the log density ratio $\log p(C^i|\mathbf{x}_g^i) / p(C^i)$, minimizing U2C loss in Equation \ref{eq:contrastive} is equivalent to maximizing the lower bound of the mutual information $I(C^i;\mathbf{x}_g^i)$, as discussed by \citet{poole2019variational} and \citet{contrastive_clustering}. 

$G$ and $E$ are trained to minimize $\frac{1}{B} \sum_{i=1}^B \left( \ell_\mathrm{adv} (\mathbf{z}^i) + \lambda \ell_\mathrm{U2C} (\mathbf{z}^i) \right)$, where $\lambda$ denotes the coefficient of U2C loss. Both $\boldsymbol{\mu}$ and $\boldsymbol{\Sigma}$ are learned by substituting $\ell_\mathrm{adv} (\mathbf{z}^i) + \lambda \ell_\mathrm{U2C} (\mathbf{z}^i)$ into $\ell$ of Equations \ref{eq:bonnet} and \ref{eq:psd}, respectively. When U2C loss is used to update $\boldsymbol{\pi}$, U2C loss hinders $\boldsymbol{\pi}$ from estimating the imbalance ratio of attributes in the data well, which is discussed in Appendix \ref{sec:rho_update_apndx} with a detailed explanation and an empirical result. Therefore, $\boldsymbol{\rho}$, from which $\boldsymbol{\pi}$ is calculated, uses only the adversarial loss, and $\ell$ of Equation \ref{eq:rho} is substituted by $\ell_\mathrm{adv} (\mathbf{z}^i)$. $\boldsymbol{\mu}$, $\boldsymbol{\Sigma}$ and $\boldsymbol{\rho}$ are learned using a batch average of estimated gradients, which is referred to as stochastic gradient estimation, instead of expectation over the latent distribution $q$. The entire training procedure of SLOGAN is presented in Algorithm \ref{alg:slogan}.

To help that the latent space does not learn low-level attributes, such as background color, we additionally used the SimCLR \citep{simclr} loss on the generated data with DiffAugment \citep{diffaugment} to train the encoder on colored image datasets. Methodological details and discussion on SimCLR are presented in Appendix \ref{sec:simclr_apndx} and \ref{sec:simclr_analysis_apndx}, respectively.

\subsection{Attribute Manipulation} \label{sec:3.5}
For datasets such as face attributes, a data point can have multiple attributes simultaneously. To learn a desired attribute from such data, a probe dataset $\{ \mathbf{x}_c^i \}_{i=1}^M$ for the $c$-th latent component, which consists of $M$ data points with the desired attribute, can be utilized. We propose the following loss:
\begin{gather}
    \mathcal{L}_\mathrm{m} = \frac{1}{M} \sum_{i=1}^M - \log \frac{\exp(\cos \theta^{i}_{c})}{\sum_{k=1}^{K} \exp(\cos \theta^{i}_{k})} \label{eq:probe}
\end{gather}
where $\cos \theta^{i}_{k} = E(\mathbf{x}_c^i) \cdot \boldsymbol{\mu}_k / \|E(\mathbf{x}_c^i)\| \|\boldsymbol{\mu}_k\|$ is the cosine similarity between $E(\mathbf{x}_c^i)$ and $\boldsymbol{\mu}_k$. Our model manipulates attributes by minimizing $\mathcal{L}_\mathrm{m}$ for $\boldsymbol{\mu},\boldsymbol{\Sigma}$, $G$, and $E$. In addition, mixup \citep{mixup} can be used to better learn attributes from a small probe dataset. The advantage of SLOGAN in attribute manipulation is that it can learn imbalanced attributes even if the attributes in the probe dataset are balanced, and perform better conditional generation. The detailed procedure of attribute manipulation is described in Appendix \ref{sec:attribute_manipulation_apndx}.

\section{Experiments} \label{sec:experiment}
\begin{table}[t!]
\centering
\caption{Performance comparison on balanced attributes}
\label{tab:balance}
{\resizebox{1\columnwidth}{!}
{\begin{tabular}{ccccccccccc}
\hline
	\toprule
    Dataset & Metric & InfoGAN & DeLiGAN & DeLiGAN+ & ClusterGAN & SCGAN & CD-GAN & PGMGAN & \textbf{SLOGAN} \\
    \midrule
    \multirow{3}*{\centering FMNIST}
    & NMI $\uparrow$ & 0.64\footnotesize{$\pm$0.02} & 0.64\footnotesize{$\pm$0.03} & 0.57\footnotesize{$\pm$0.07} & 0.61\footnotesize{$\pm$0.03} & 0.56\footnotesize{$\pm$0.01} & 0.56\footnotesize{$\pm$0.04} & 0.47\footnotesize{$\pm$0.01} & \textbf{0.66\footnotesize{$\pm$0.01}} \\
    & FID $\downarrow$ & 5.28\footnotesize{$\pm$0.12} & 6.65\footnotesize{$\pm$0.48} & 7.23\footnotesize{$\pm$0.56} & 6.32\footnotesize{$\pm$0.25} & \textbf{5.07\footnotesize{$\pm$0.19}} & 9.05\footnotesize{$\pm$0.11} & 9.13\footnotesize{$\pm$0.28} & 5.20\footnotesize{$\pm$0.36} \\
    & ICFID $\downarrow$ & 32.18\footnotesize{$\pm$2.11} & 34.87\footnotesize{$\pm$5.29} & 30.53\footnotesize{$\pm$8.71} & 37.20\footnotesize{$\pm$5.50} & 26.23\footnotesize{$\pm$7.10} & 36.61\footnotesize{$\pm$0.47} & 40.00\footnotesize{$\pm$4.38} & \textbf{23.31\footnotesize{$\pm$2.77}} \\
    \midrule 
    \multirow{3}*{\centering CIFAR-10}
    & NMI $\uparrow$ & 0.03\footnotesize{$\pm$0.00} & 0.06\footnotesize{$\pm$0.00} & 0.09\footnotesize{$\pm$0.04} & 0.10\footnotesize{$\pm$0.00} & 0.01\footnotesize{$\pm$0.00} & 0.03\footnotesize{$\pm$0.01} & 0.29\footnotesize{$\pm$0.02} & \textbf{0.34\footnotesize{$\pm$0.01}} \\
    & FID $\downarrow$ & 81.84\footnotesize{$\pm$2.27} & \footnotesize{212.20}\footnotesize{$\pm$4.52} & \footnotesize{110.51}\footnotesize{$\pm$7.70} & 61.97\footnotesize{$\pm$3.69} & \footnotesize{199.28}\footnotesize{$\pm$57.16} &  34.13\footnotesize{$\pm$1.13} &  31.50\footnotesize{$\pm$0.73} & \textbf{20.61\footnotesize{$\pm$0.40}} \\
    & ICFID $\downarrow$ & \footnotesize{139.20}\footnotesize{$\pm$2.09} & \footnotesize{305.32}\footnotesize{$\pm$5.05} & \footnotesize{215.63}\footnotesize{$\pm$11.16} & \footnotesize{124.27}\footnotesize{$\pm$5.95} & \footnotesize{262.54}\footnotesize{$\pm$59.29} & 
    95.43\footnotesize{$\pm$3.58} & 81.25\footnotesize{$\pm$11.55} & \textbf{71.23\footnotesize{$\pm$6.76}} \\
    \bottomrule
\end{tabular}}
}
\end{table}
\begin{table}[t!]
\centering
\caption{Performance comparison on imbalanced attributes}
\label{tab:imbalance}
{\resizebox{1\columnwidth}{!}
{\begin{tabular}{cccccccccc}
\hline
	\toprule
    Dataset & Metric & InfoGAN & DeLiGAN & DeLiGAN+ & ClusterGAN & SCGAN & CD-GAN & PGMGAN & \textbf{SLOGAN} \\
    \midrule
    \multirow{3}*{\centering FMNIST-5}
    & NMI $\uparrow$ & 0.58\footnotesize{$\pm$0.07} & \textbf{0.68\footnotesize{$\pm$0.05}} & 0.65\footnotesize{$\pm$0.01} & 0.60\footnotesize{$\pm$0.02} & 0.60\footnotesize{$\pm$0.06} & 0.59\footnotesize{$\pm$0.01} & 0.24\footnotesize{$\pm$0.02} & 0.66\footnotesize{$\pm$0.06} \\
    & FID $\downarrow$ & 5.40\footnotesize{$\pm$0.14} & 7.05\footnotesize{$\pm$0.49} & 6.33\footnotesize{$\pm$0.44} & 5.61\footnotesize{$\pm$0.17} & \textbf{5.01\footnotesize{$\pm$0.20}} & 9.34\footnotesize{$\pm$0.56} & 11.80\footnotesize{$\pm$0.43} & 5.29\footnotesize{$\pm$0.16} \\
    & ICFID $\downarrow$ & 43.69\footnotesize{$\pm$10.84} & 36.21\footnotesize{$\pm$3.07} & 35.41\footnotesize{$\pm$0.79} & 36.94\footnotesize{$\pm$5.81} & 44.48\footnotesize{$\pm$21.62} & 39.31\footnotesize{$\pm$1.18} & 77.30\footnotesize{$\pm$8.60} & \textbf{32.46\footnotesize{$\pm$3.18}} \\
    \midrule
    \multirow{3}*{\centering {{\begin{tabular}[c]{@{}c@{}}CIFAR-2\\ (7:3)\end{tabular}}}}
    & NMI $\uparrow$ & 0.05\footnotesize{$\pm$0.01} & 0.00\footnotesize{$\pm$0.00} & 0.03\footnotesize{$\pm$0.03} & 0.22\footnotesize{$\pm$0.02} & 0.00\footnotesize{$\pm$0.00} & 0.22\footnotesize{$\pm$0.03} & 0.42\footnotesize{$\pm$0.03} & \textbf{0.69\footnotesize{$\pm$0.02}} \\
    & FID $\downarrow$ & 51.30\footnotesize{$\pm$2.53} & \footnotesize{131.73}\footnotesize{$\pm$50.98} & \footnotesize{115.19}\footnotesize{$\pm$17.95} & 36.62\footnotesize{$\pm$2.16} & 45.28\footnotesize{$\pm$1.81} & 36.40\footnotesize{$\pm$1.01} & 29.76\footnotesize{$\pm$1.65} & \textbf{29.09\footnotesize{$\pm$0.73}} \\
    & ICFID $\downarrow$ & 88.49\footnotesize{$\pm$6.85} & \footnotesize{186.31}\footnotesize{$\pm$28.31} & \footnotesize{173.81}\footnotesize{$\pm$18.29} & 75.52\footnotesize{$\pm$4.82} & 88.58\footnotesize{$\pm$4.57} & 76.91\footnotesize{$\pm$1.07} & 57.06\footnotesize{$\pm$3.31} & \textbf{45.83\footnotesize{$\pm$3.03}} \\
    \bottomrule
\end{tabular}}
}
\end{table}

\begin{figure*}[t!]
\centering
\includegraphics[width=1\textwidth]{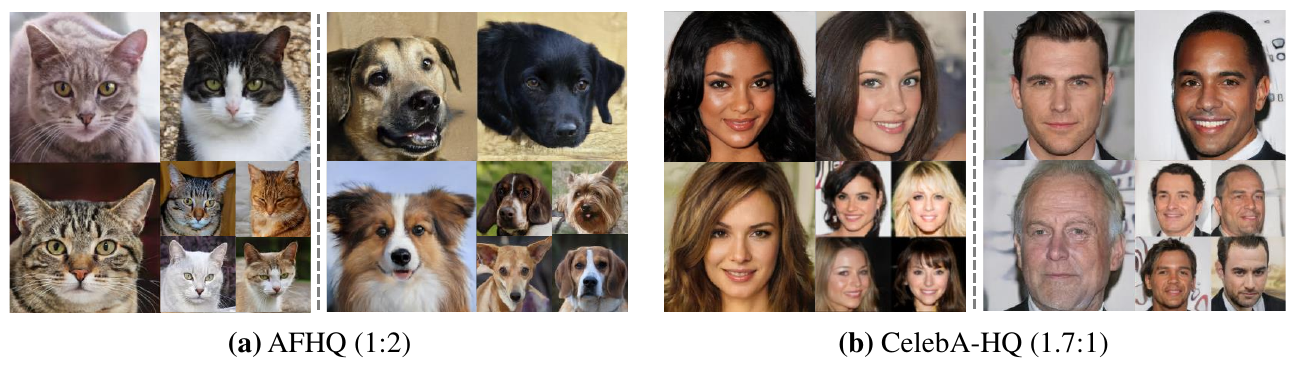} 
\caption{Generated high-fidelity images from SLOGAN on (a) AFHQ and (b) CelebA-HQ.}
\label{fig:qualitative}
\end{figure*}

\subsection{Datasets} \label{sec:dataset}
We used the MNIST \citep{mnist}, Fashion-MNIST (FMNIST) \citep{fmnist}, CIFAR-10 \citep{cifar10}, CelebA \citep{celeba}, CelebA-HQ \citep{celeba-hq}, and AFHQ \citep{afhq} datasets to evaluate the proposed method. We also constructed some datasets with imbalanced attributes. For example, we used two classes of the MNIST dataset (0 vs. 4, referred to as MNIST-2), two classes of the CIFAR-10 dataset (frogs vs. planes, referred to as CIFAR-2), and five clusters of the FMNIST dataset (\{Trouser\}, \{Bag\}, \{T-shirt/top, Dress\}, \{Pullover, Coat, Shirt\}, \{Sneaker, Sandal, Ankle Boot\}, referred to as FMNIST-5 with an imbalance ratio of 1:1:2:3:3). Details of the datasets are provided in Appendix \ref{sec:implementation_details_apndx}.

Although SLOGAN and other methods do not utilize labels for training, the data in experimental settings have labels predefined by humans. We consider that each class of dataset contains a distinct attribute. Thus, the model performance was measured using classes of datasets. The number of latent components or the dimension of the discrete latent code ($K$) was set as the number of classes of data.

\subsection{Evaluation Metrics}
The performance of our method was evaluated quantitatively in three aspects: (1) whether the model could learn distinct attributes and cluster real data (i.e., cluster assignment), which is evaluated using normalized mutual information (NMI) \citep{clustergan}, (2) whether the overall data distribution $p(\mathbf{x}_r)$ could be estimated (i.e., unconditional data generation), which is measured using the Fr\'echet inception distance (FID) \citep{ttur}, and, most importantly, (3) whether the data distribution for each attribute $p(\mathbf{x}_r|c)$ could be estimated (i.e., unsupervised conditional generation). 

For unsupervised conditional generation, it is important to account for intra-cluster diversity as well as the quality of the generated samples. We introduce a modified version of FID named intra-cluster Fr\'echet inception distance (ICFID) described in Algorithm \ref{alg:icfid}. We calculate FIDs between the real data of each class and generated data from each latent code (a mixture component for DeLiGAN and SLOGAN, and a category for other methods). We then greedily match a latent code with a class of real data with the smallest FID. We define ICFID as the average FID between the matched pairs and use it as an evaluation metric for unsupervised conditional generation. ICFID additionally provides class-cluster assignment (i.e., which cluster is the closest to the class).

\subsection{Evaluation Results}
We compared SLOGAN with InfoGAN \citep{infogan}, DeLiGAN \citep{deligan}, ClusterGAN \citep{clustergan}, Self-conditioned GAN (SCGAN) \citep{scgan}, CD-GAN \citep{cdgan}, and PGMGAN \citep{pgmgan}. Following \citet{clustergan}, we used k-means clustering on the encoder outputs of the test data to calculate NMI. DeLiGAN has no encoder network; hence the pre-activation of the penultimate layer of $D$ was used for the clustering metrics. For a fair comparison, we also compared DeLiGAN with an encoder network (referred to as DeLiGAN+). The same network architecture and hyperparameters (e.g., learning rate) were used across all methods for comparison. Details of the experiments and DeLiGAN+ are presented in Appendices \ref{sec:implementation_details_apndx} and \ref{sec:deligan+_apndx}, respectively.

\paragraph{Balanced attributes}
We compare SLOGAN with existing unsupervised conditional GANs on datasets with balanced attributes. As shown in Table \ref{tab:balance} (The complete version is given in Appendix \ref{sec:balance_apndx}.), SLOGAN outperformed other GANs, and comparisons with methods with categorical priors (ClusterGAN and CD-GAN) verified the advantages of the mixture priors. 

\begin{figure}[t!]
\begin{minipage}{0.54\linewidth}
\raggedleft
\includegraphics[width=1\linewidth]{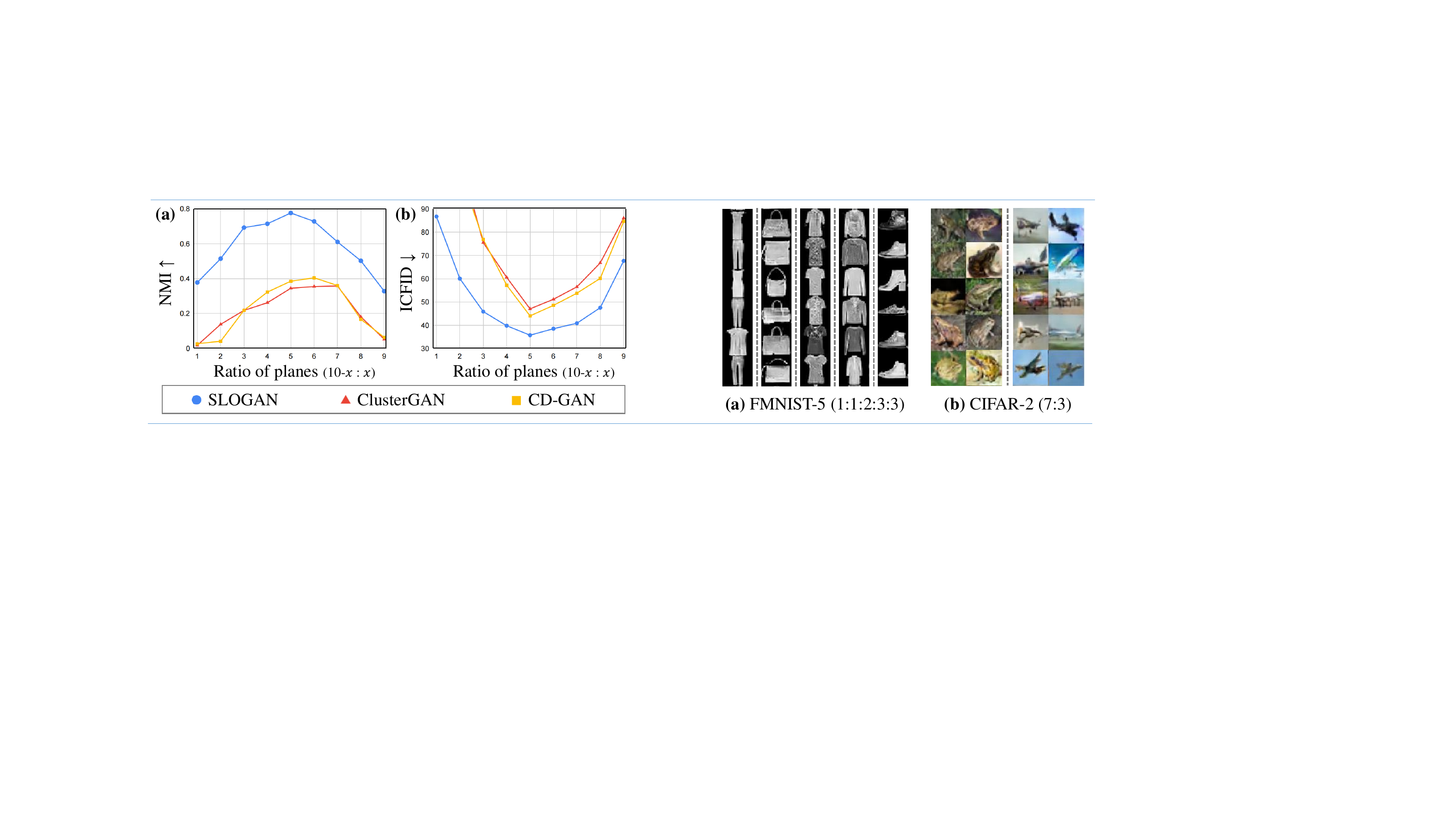} 
\caption{Performance comparison with respect to the imbalance ratio on (a) cluster assignment and (b) unsupervised conditional generation.} %
\label{fig:imb_ratio_apndx}
\end{minipage}
\hfill
\begin{minipage}{0.41\linewidth}
\raggedright
\setlength\tabcolsep{0pt}
\captionof{table}{Effectiveness of U2C loss}
\label{tab:ablation_contrastive}  
\begin{tabular*}{\linewidth}{@{\extracolsep{\fill}}llc}
    \toprule
    Dataset & Ablation & ICFID $\downarrow$ \\
    \midrule
    \multirow{2}{*}{\footnotesize{CIFAR-10}} & \footnotesize{SLOGAN w/o $\ell_\mathrm{U2C}$} & \footnotesize{78.26} \\
    & \footnotesize{SLOGAN} & \footnotesize{\textbf{71.23}} \\
    \midrule
    \footnotesize{MNIST-2} & \footnotesize{SLOGAN w/o $\ell_\mathrm{U2C}$} & \footnotesize{9.43} \\
    ~~~\footnotesize{(7:3)} & \footnotesize{SLOGAN} & \footnotesize{\textbf{5.91}} \\
    \midrule
    \multirow{2}{*}{\footnotesize{Synthetic}} & \footnotesize{SLOGAN w/o $\ell_\mathrm{U2C}$} & \footnotesize{\textcolor{red}{\xmark}} \\
    & \footnotesize{SLOGAN} & \footnotesize{\textcolor{ForestGreen}{\boldsymbol{\cmark}}} \\
    \bottomrule
\end{tabular*}
\end{minipage}
\end{figure}

\begin{table}[t!]
\centering
\caption{Effectiveness of implicit reparameterization}
\label{tab:ablation_implicit}
{\resizebox{0.85\columnwidth}{!}
{\begin{tabular}{llrr}
    \toprule
    Dataset & Ablation & $\pi_{y=0}$ \scriptsize{(ground-truth: 0.7)} & ICFID $\downarrow$ \\
    \midrule
    \multirow{3}{*}{\footnotesize{CIFAR-2 (7:3)}} & DeLiGAN with $\ell_\mathrm{U2C}$ & 0.50 & 60.51 \\
    & DeLiGAN with $\ell_\mathrm{U2C}$ and implicit $\boldsymbol{\rho}$ update & 1.00 & 86.48 \\
    & SLOGAN & \textbf{0.69} & \textbf{45.83} \\
    \bottomrule
\end{tabular}}}
\end{table}

\paragraph{Imbalanced attributes}
In Table \ref{tab:imbalance} (The complete version is presented in Appendix \ref{sec:imbalance_apndx}), we compare SLOGAN with existing methods on datasets with imbalanced attributes. ICFIDs of our method are much better than those of other methods, which indicates that SLOGAN was able to robustly capture the minority attributes in datasets and can generate data conditioned on the learned attributes. In CIFAR-2 (7:3), the ratio of frog and plane is 7 to 3 and the estimated $\boldsymbol{\pi}$ is (0.69$\pm$0.02, 0.31$\pm$0.02), which are very close to the ground-truth (0.7, 0.3). Figure \ref{fig:qualitative} (a) shows the images generated from each latent component of SLOGAN on AFHQ (Cat:Dog=1:2). More qualitative results are presented in Appendix \ref{sec:generated_images_apndx}.

\paragraph{Performance with respect to imbalance ratio}
We compared the performance of SLOGAN with competitive benchmarks (ClusterGAN and CD-GAN) by changing the imbalance ratios of CIFAR-2 from 9:1 to 1:9. SLOGAN showed higher performance than the benchmarks on cluster assignment (Figure~\ref{fig:imb_ratio_apndx} (a)) and unsupervised conditional generation (Figure~\ref{fig:imb_ratio_apndx} (b)) for all imbalance ratios. Furthermore, our method shows a larger gap in ICFID with the benchmarks when the ratio of planes is low. This implies that SLOGAN works robustly in situations in which the attributes of data are highly imbalanced. We conducted additional experiments including interpolation in the latent space, benefits of ICFID. The results of the additional experiments are shown in Appendix \ref{sec:additional_apndx}.

\subsection{Ablation Study}
\paragraph{U2C loss}
Table \ref{tab:ablation_contrastive} (The complete version is given in Appendix \ref{sec:ablation_apndx}) shows the benefit of U2C loss on several datasets. Low-level features (e.g., color) of the CIFAR dataset differ depending on the class, which enables SLOGAN to function to some extent without U2C loss on CIFAR-10. In the MNIST dataset, the colors of the background (black) and object (white) are the same, and only the shape of objects differs depending on the class. U2C loss played an essential role on MNIST (7:3). The modes of the Synthetic dataset (Figure \ref{fig:toy}) are placed adjacent to each other, and SLOGAN cannot function on this dataset without U2C loss. From the results, we observed that the effectiveness of U2C loss depends on the properties of the datasets.

\paragraph{Implicit reparameterization}
To show the advantage of implicit over explicit reparameterization, we implemented DeLiGAN with U2C loss by applying explicit reparameterization on $\boldsymbol{\mu}$ and $\boldsymbol{\Sigma}$. Because the mixing coefficient cannot be updated with explicit reparameterization to the best of our knowledge, we also implemented DeLiGAN with U2C loss and implicit reparameterization on $\boldsymbol{\rho}$ using Equation \ref{eq:rho}. In Table \ref{tab:ablation_implicit}, SLOGAN using implicit reparameterization outperformed explicit reparameterization. When implicit $\boldsymbol{\rho}$ update was added, the prior collapsed into a single component ($\pi_{y=0}=1$) and ICFID increased. The lower variance of implicit reparameterized gradients prevents the prior from collapsing into a single component and improves the performance. Additional ablation studies and discussions are presented in Appendix \ref{sec:ablation_apndx}.

\subsection{Effects of Probe Data} \label{sec:probe}
\paragraph{CelebA + ResGAN}
We demonstrate that SLOGAN can learn the desired attributes using a small amount of probe data. Among multiple attributes which co-exist in the CelebA dataset, we chose Male (1:1) and Eyeglasses (14:1). We randomly selected 30 probe images for each latent component. $\pi_{y=0}$ represents the learned mixing coefficient that correspond to the latent component associated with faces without the attribute. As shown in Figure \ref{fig:celeba} and Table \ref{tab:celeba}, we observed that SLOGAN learned the desired attributes. Additional experiments on attribute manipulation are shown in Appendix \ref{sec:attribute_manipulation_analysis_apndx}.

\paragraph{CelebA-HQ + StyleGAN2}
StyleGAN2 \citep{stylegan2} differs from other GANs in that the latent vectors are used for \textit{style}. Despite this difference, the implicit reparameterization and U2C loss can be applied to the input space of the mapping network. On the CelebA-HQ dataset, we used 30 male and 30 female faces as probe data. As shown in Figure \ref{fig:qualitative} (b), SLOGAN successfully performed on high-resolution images and a recent architecture, even simultaneously with imbalanced attributes.

\begin{figure}[t!]
\centering
\begin{minipage}{0.6\linewidth}
\centering
\includegraphics[width=1.0\columnwidth]{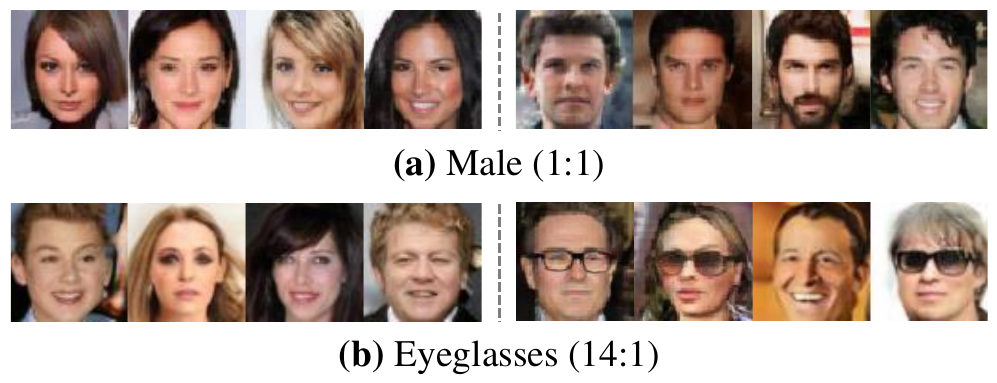} 
\caption{Qualitative results of SLOGAN on CelebA.}
\label{fig:celeba}
\end{minipage}
\hfill
\begin{minipage}{0.35\linewidth}
\setlength\tabcolsep{0pt}
\captionof{table}{Quantitative results of SLOGAN on CelebA}
\label{tab:celeba}
  \begin{tabular*}{\linewidth}{@{\extracolsep{\fill}}ccc}
	\toprule
     & Male & Eyeglasses \\
    \footnotesize{Imb. ratio} & \footnotesize{(1:1)} & \footnotesize{(14:1)} \\
    \midrule
    \footnotesize{NMI} $\uparrow$ & \footnotesize{0.65$\pm$0.01} & \footnotesize{0.29$\pm$0.07} \\
   \footnotesize{FID} $\downarrow$ & \footnotesize{5.18$\pm$0.20} & \footnotesize{5.83$\pm$0.44} \\
    \footnotesize{ICFID} $\downarrow$ & \footnotesize{11.00$\pm$0.66} & \footnotesize{35.57$\pm$5.10} \\
    \footnotesize{$\pi_{y=0}$} & \footnotesize{0.56$\pm$0.02} & \footnotesize{0.82$\pm$0.04} \\
    \bottomrule
\end{tabular*}

\end{minipage}
\end{figure}



\section{Conclusion} \label{sec:conclusion}
We have proposed a method called SLOGAN to generate data conditioned on learned attributes on real-world datasets with balanced or imbalanced attributes. We derive implicit reparameterization for the parameters of the latent distribution. We then proposed a GAN framework and unsupervised conditional contrastive loss (U2C loss). We verified that SLOGAN achieved state-of-the-art unsupervised conditional generation performance. In addition, a small amount of probe data helps SLOGAN control attributes. In future work, we will consider a principled method to learn the number and hierarchy of attributes in real-world data. In addition, improving the quality of samples with minority attributes is an important avenue for future research on unsupervised conditional GANs.

\newpage

\section*{Acknowledgement}
This work was supported by the BK21 FOUR program of the Education and Research Program for Future ICT Pioneers, Seoul National University in 2021, Institute of Information \& communications Technology Planning \& Evaluation (IITP) grant funded by the Korea government(MSIT) [NO.2021-0-01343, Artificial Intelligence Graduate School Program (Seoul National University)], and AIR Lab (AI Research Lab) in Hyundai Motor Company through HMC-SNU AI Consortium Fund.

\bibliography{iclr2022_conference}
\bibliographystyle{iclr2022_conference}

\newpage

\appendix
\newtheorem{thm}{Theorem}
\setcounter{thm}{0}

\section{Additional Results and Discussion} \label{sec:additional_apndx}
We also compared the proposed method with WGAN \citepapndx{wgan_apndx} with the proposed method. We used the pre-activation of the penultimate layer of $D$ for the clustering metrics because WGAN has no encoder network. We could not measure ICFID of WGAN because it cannot perform unsupervised conditional generation. In addition to various datasets, we also used the 10x\_73k dataset \citepapndx{10x73k_apndx}, which consists of RNA transcript counts. From the results of the clustering performances on the 10x 73k dataset, we show that SLOGAN learns useful imbalanced attributes and can be helpful in the use of unlabeled biomedical data.

\subsection{Performance Comparison on Balanced Attributes} \label{sec:balance_apndx}
\begin{table}[h!]
\centering
\caption{Performance comparison on balanced attributes}
\label{tab:balance_apndx}
{\resizebox{1\columnwidth}{!}
{\begin{tabular}{cccccccccccc}
\hline
	\toprule
    Dataset & Metric & WGAN & InfoGAN & DeLiGAN & DeLiGAN+ & ClusterGAN & SCGAN & CD-GAN  & PGMGAN & \textbf{SLOGAN} \\
    \midrule
    \multirow{3}*{\centering MNIST}
    & NMI & 0.78\footnotesize{$\pm$0.02} & 0.90\footnotesize{$\pm$0.03} & 0.70\footnotesize{$\pm$0.05} & 0.77\footnotesize{$\pm$0.05} & 0.81\footnotesize{$\pm$0.02} & 0.74\footnotesize{$\pm$0.06} & 0.87\footnotesize{$\pm$0.03} & 0.16\footnotesize{$\pm$0.27} & \textbf{0.92\footnotesize{$\pm$0.00}} \\
    & FID & 3.05\footnotesize{$\pm$0.20} & 1.72\footnotesize{$\pm$0.17} & 1.92\footnotesize{$\pm$0.12} & 2.00\footnotesize{$\pm$0.16} & 1.71\footnotesize{$\pm$0.07} & 3.06\footnotesize{$\pm$0.53} & 2.75\footnotesize{$\pm$0.04} & 5.76\footnotesize{$\pm$1.67} & \textbf{1.67\footnotesize{$\pm$0.15}} \\
    & ICFID & N/A & 5.56\footnotesize{$\pm$0.71} & 5.74\footnotesize{$\pm$0.25} & 5.64\footnotesize{$\pm$0.39} & 5.12\footnotesize{$\pm$0.07} & 16.65\footnotesize{$\pm$2.01} & 7.03\footnotesize{$\pm$0.23} & 53.40\footnotesize{$\pm$12.49} & \textbf{4.99\footnotesize{$\pm$0.19}} \\
    \midrule 
    \multirow{3}*{\centering FMNIST}
    & NMI & 0.65\footnotesize{$\pm$0.02} & 0.64\footnotesize{$\pm$0.02} & 0.64\footnotesize{$\pm$0.03} & 0.57\footnotesize{$\pm$0.07} & 0.61\footnotesize{$\pm$0.03} & 0.56\footnotesize{$\pm$0.01} & 0.56\footnotesize{$\pm$0.04} & 0.47\footnotesize{$\pm$0.01} & \textbf{0.66\footnotesize{$\pm$0.01}} \\
    & FID & 5.74\footnotesize{$\pm$0.49} & 5.28\footnotesize{$\pm$0.12} & 6.65\footnotesize{$\pm$0.48} & 7.23\footnotesize{$\pm$0.56} & 6.32\footnotesize{$\pm$0.25} & \textbf{5.07\footnotesize{$\pm$0.19}} & 9.05\footnotesize{$\pm$0.11} & 9.13\footnotesize{$\pm$0.28} & 5.20\footnotesize{$\pm$0.36} \\
    & ICFID & N/A & 32.18\footnotesize{$\pm$2.11} & 34.87\footnotesize{$\pm$5.29} & 30.53\footnotesize{$\pm$8.71} & 37.20\footnotesize{$\pm$5.50} & 26.23\footnotesize{$\pm$7.10} & 36.61\footnotesize{$\pm$0.47} & 40.00\footnotesize{$\pm$4.38} & \textbf{23.31\footnotesize{$\pm$2.77}} \\
    \midrule
    \multirow{3}*{\centering CIFAR-2}
    & NMI & 0.14\footnotesize{$\pm$0.02} & 0.05\footnotesize{$\pm$0.03} & 0.15\footnotesize{$\pm$0.13} & 0.12\footnotesize{$\pm$0.12} & 0.34\footnotesize{$\pm$0.02} & 0.00\footnotesize{$\pm$0.00} & 0.38\footnotesize{$\pm$0.01} & 0.67\footnotesize{$\pm$0.00} & \textbf{0.78\footnotesize{$\pm$0.03}} \\
    & FID & 29.54\footnotesize{$\pm$0.59} & 58.84\footnotesize{$\pm$13.11} & \footnotesize{338.97}\footnotesize{$\pm$70.85} & \footnotesize{116.95}\footnotesize{$\pm$19.42} & 36.28\footnotesize{$\pm$1.12} & 39.44\footnotesize{$\pm$1.72} & 34.45\footnotesize{$\pm$0.74} & 29.49\footnotesize{$\pm$0.51} & \textbf{28.99\footnotesize{$\pm$0.36}} \\
    & ICFID & N/A & 91.97\footnotesize{$\pm$14.21} & \footnotesize{361.66}\footnotesize{$\pm$71.28} & \footnotesize{153.19}\footnotesize{$\pm$17.71} & 47.02\footnotesize{$\pm$1.85} & 71.54\footnotesize{$\pm$5.41} & 43.98\footnotesize{$\pm$1.47} & \textbf{35.67\footnotesize{$\pm$0.61}} & 35.68\footnotesize{$\pm$0.51} \\
    \midrule
    \multirow{3}*{\centering CIFAR-10}
    & NMI & 0.27\footnotesize{$\pm$0.05} & 0.03\footnotesize{$\pm$0.00} & 0.06\footnotesize{$\pm$0.00} & 0.09\footnotesize{$\pm$0.04} & 0.10\footnotesize{$\pm$0.00} & 0.01\footnotesize{$\pm$0.00} & 0.03\footnotesize{$\pm$0.01} & 0.29\footnotesize{$\pm$0.02} & \textbf{0.34\footnotesize{$\pm$0.01}} \\
    & FID & \textbf{20.56\footnotesize{$\pm$0.76}} & 81.84\footnotesize{$\pm$2.27} & \footnotesize{212.20}\footnotesize{$\pm$4.52} & \footnotesize{110.51}\footnotesize{$\pm$7.70} & 61.97\footnotesize{$\pm$3.69} & \footnotesize{199.28}\footnotesize{$\pm$57.16} &  34.13\footnotesize{$\pm$1.13} &  31.50\footnotesize{$\pm$0.73} & 20.61\footnotesize{$\pm$0.40} \\
    & ICFID & N/A & \footnotesize{139.20}\footnotesize{$\pm$2.09} & \footnotesize{305.32}\footnotesize{$\pm$5.05} & \footnotesize{215.63}\footnotesize{$\pm$11.16} & \footnotesize{124.27}\footnotesize{$\pm$5.95} & \footnotesize{262.54}\footnotesize{$\pm$59.29} & 
    95.43\footnotesize{$\pm$3.58} & 81.25\footnotesize{$\pm$11.55} & \textbf{71.23\footnotesize{$\pm$6.76}} \\
    \bottomrule
\end{tabular}}
}
\end{table}

\subsection{Performance Comparison on Imalanced Attributes} \label{sec:imbalance_apndx}

\begin{table}[h!]
\centering
\caption{Performance comparison on imbalanced attributes}
\label{tab:imbalance_apndx}
{\resizebox{1\columnwidth}{!}
{\begin{tabular}{ccccccccccc}
\hline
	\toprule
    Dataset & Metric & WGAN & InfoGAN & DeLiGAN & DeLiGAN+ & ClusterGAN & SCGAN & CD-GAN & PGMGAN & \textbf{SLOGAN} \\
    \midrule
    \multirow{3}*{\centering {{\begin{tabular}[c]{@{}c@{}}MNIST-2\\ (7:3)\end{tabular}}}}
    & NMI & 0.90\footnotesize{$\pm$0.03} & 0.28\footnotesize{$\pm$0.19} & 0.90\footnotesize{$\pm$0.04} & 0.48\footnotesize{$\pm$0.09} & 0.27\footnotesize{$\pm$0.19} & 0.67\footnotesize{$\pm$0.11} & 0.41\footnotesize{$\pm$0.03} & 0.79\footnotesize{$\pm$0.21} & \textbf{0.92\footnotesize{$\pm$0.05}} \\
    & FID & 4.27\footnotesize{$\pm$0.19} & 4.92\footnotesize{$\pm$0.85} & 4.21\footnotesize{$\pm$0.84} & 4.63\footnotesize{$\pm$2.02} & 4.25\footnotesize{$\pm$1.06} & 4.34\footnotesize{$\pm$0.73} & 4.67\footnotesize{$\pm$1.92} & 8.90\footnotesize{$\pm$14.82} & \textbf{4.02\footnotesize{$\pm$0.86}} \\
    & ICFID & N/A & 36.35\footnotesize{$\pm$10.65} & 25.34\footnotesize{$\pm$1.72} & 26.61\footnotesize{$\pm$1.49} & 25.41\footnotesize{$\pm$1.02} & 16.47\footnotesize{$\pm$1.51} & 26.71\footnotesize{$\pm$2.47} & 14.82\footnotesize{$\pm$9.16} & \textbf{5.91\footnotesize{$\pm$1.06}} \\
    \midrule 
    \multirow{3}*{\centering FMNIST-5}
    & NMI & 0.65\footnotesize{$\pm$0.00} & 0.58\footnotesize{$\pm$0.07} & \textbf{0.68\footnotesize{$\pm$0.05}} & 0.65\footnotesize{$\pm$0.01} & 0.60\footnotesize{$\pm$0.02} & 0.60\footnotesize{$\pm$0.06} & 0.59\footnotesize{$\pm$0.01} & 0.24\footnotesize{$\pm$0.02} & 0.66\footnotesize{$\pm$0.06} \\
    & FID & 6.55\footnotesize{$\pm$0.20} & 5.40\footnotesize{$\pm$0.14} & 7.05\footnotesize{$\pm$0.49} & 6.33\footnotesize{$\pm$0.44} & 5.61\footnotesize{$\pm$0.17} & \textbf{5.01\footnotesize{$\pm$0.20}} & 9.34\footnotesize{$\pm$0.56} & 11.80\footnotesize{$\pm$0.43} & 5.29\footnotesize{$\pm$0.16} \\
    & ICFID & N/A & 43.69\footnotesize{$\pm$10.84} & 36.21\footnotesize{$\pm$3.07} & 35.41\footnotesize{$\pm$0.79} & 36.94\footnotesize{$\pm$5.81} & 44.48\footnotesize{$\pm$21.62} & 39.31\footnotesize{$\pm$1.18} & 77.30\footnotesize{$\pm$8.60} & \textbf{32.46\footnotesize{$\pm$3.18}} \\
    \midrule
    \multirow{1}*{\centering 10x\_73k}
    & NMI & 0.22\footnotesize{$\pm$0.04} & 0.42\footnotesize{$\pm$0.06} & 0.61\footnotesize{$\pm$0.01} & 0.60\footnotesize{$\pm$0.01} & 0.66\footnotesize{$\pm$0.02} & 0.47\footnotesize{$\pm$0.02} & 0.68\footnotesize{$\pm$0.03} & 0.33\footnotesize{$\pm$0.07} & \textbf{0.76\footnotesize{$\pm$0.02}} \\
    \midrule
    \multirow{3}*{\centering {{\begin{tabular}[c]{@{}c@{}}CIFAR-2\\ (7:3)\end{tabular}}}}
    & NMI & 0.09\footnotesize{$\pm$0.07} & 0.05\footnotesize{$\pm$0.01} & 0.00\footnotesize{$\pm$0.00} & 0.03\footnotesize{$\pm$0.03} & 0.22\footnotesize{$\pm$0.02} & 0.00\footnotesize{$\pm$0.00} & 0.22\footnotesize{$\pm$0.03} & 0.42\footnotesize{$\pm$0.03} & \textbf{0.69\footnotesize{$\pm$0.02}} \\
    & FID & 29.16\footnotesize{$\pm$0.90} & 51.30\footnotesize{$\pm$2.53} & \footnotesize{131.73}\footnotesize{$\pm$50.98} & \footnotesize{115.19}\footnotesize{$\pm$17.95} & 36.62\footnotesize{$\pm$2.16} & 45.28\footnotesize{$\pm$1.81} & 36.40\footnotesize{$\pm$1.01} & 29.76\footnotesize{$\pm$1.65} & \textbf{29.09\footnotesize{$\pm$0.73}} \\
    & ICFID & N/A & 88.49\footnotesize{$\pm$6.85} & \footnotesize{186.31}\footnotesize{$\pm$28.31} & \footnotesize{173.81}\footnotesize{$\pm$18.29} & 75.52\footnotesize{$\pm$4.82} & 88.58\footnotesize{$\pm$4.57} & 76.91\footnotesize{$\pm$1.07} & 57.06\footnotesize{$\pm$3.31} & \textbf{45.83\footnotesize{$\pm$3.03}} \\
    \midrule
    \multirow{3}*{\centering {{\begin{tabular}[c]{@{}c@{}}CIFAR-2\\ (9:1)\end{tabular}}}}
    & NMI & 0.04\footnotesize{$\pm$0.04} & 0.00\footnotesize{$\pm$0.00} & 0.02\footnotesize{$\pm$0.02} & 0.09\footnotesize{$\pm$0.11} & 0.02\footnotesize{$\pm$0.01} & 0.00\footnotesize{$\pm$0.00} & 0.05\footnotesize{$\pm$0.03} & 0.16\footnotesize{$\pm$0.03} & \textbf{0.38\footnotesize{$\pm$0.01}} \\
    & FID & \textbf{29.37\footnotesize{$\pm$0.53}} & 60.76\footnotesize{$\pm$8.97} & \footnotesize{129.50}\footnotesize{$\pm$25.33} & \footnotesize{139.75}\footnotesize{$\pm$47.13} & 41.69\footnotesize{$\pm$0.83} & 50.45\footnotesize{$\pm$1.56} & 38.15\footnotesize{$\pm$2.70} & 30.23\footnotesize{$\pm$1.31} & 29.47\footnotesize{$\pm$1.53} \\
    & ICFID & N/A & \footnotesize{138.24}\footnotesize{$\pm$10.23} & \footnotesize{205.26}\footnotesize{$\pm$10.93} & \footnotesize{196.00}\footnotesize{$\pm$17.86} & \footnotesize{133.31}\footnotesize{$\pm$2.03} & \footnotesize{123.35}\footnotesize{$\pm$6.56} & \footnotesize{128.46}\footnotesize{$\pm$3.03} & \footnotesize{101.68}\footnotesize{$\pm$3.87} & \textbf{86.75\footnotesize{$\pm$1.87}} \\
    \bottomrule
\end{tabular}}
}
\end{table}

\subsection{Ablation Study} \label{sec:ablation_apndx}
\begin{table}[h!]
\centering
\caption{Ablation study on CIFAR-2 (7:3)}
\label{tab:ablation}
{\resizebox{0.85\columnwidth}{!}
{\begin{tabular}{lrr}
    \toprule
    Ablation & $\pi_{y=0}$ \scriptsize{(ground-truth: 0.7)} & ICFID $\downarrow$ \\
    \midrule
    \textbf{Factor analysis} & & \\
    ~~SLOGAN without $\boldsymbol{\mu}$, $\boldsymbol{\Sigma}$, $\boldsymbol{\rho}$ updates, $\ell_\mathrm{U2C}$ & 0.50 & 84.44 \\
    ~~SLOGAN without $\boldsymbol{\mu}$, $\boldsymbol{\Sigma}$, $\boldsymbol{\rho}$ updates & 0.50 & 77.32 \\
    ~~SLOGAN without $\boldsymbol{\mu}$ update & 0.52 & 73.79 \\
    ~~SLOGAN without $\boldsymbol{\rho}$ update & 0.50 & 63.09 \\
    ~~SLOGAN without $\boldsymbol{\Sigma}$ update & 0.69 & 48.34 \\
    ~~SLOGAN without $\ell_\mathrm{U2C}$ & 0.66 & 48.82 \\
    \midrule
    \textbf{Implicit reparameterization} & & \\
    ~~DeLiGAN with $\ell_\mathrm{U2C}$ & 0.50 & 60.51 \\
    ~~DeLiGAN with $\ell_\mathrm{U2C}$ and implicit $\boldsymbol{\rho}$ update & 1.00 & 86.48 \\
    \midrule
    \textbf{Loss for $\boldsymbol{\rho}$ update} & & \\
    ~~SLOGAN with $\ell_\mathrm{U2C}$ for $\boldsymbol{\rho}$ update & 0.62 & 52.67 \\
    \midrule
    \textbf{SimCLR analysis} & & \\
    ~~SLOGAN without SimCLR & 0.66 & 49.25 \\
    ~~SLOGAN without SimCLR on real data only & 0.67 & 48.41 \\
    ~~SLOGAN without SimCLR on both real and fake data & 0.69 & 47.93 \\
    \midrule
    \textbf{Attribute manipulation} & & \\
    ~~SLOGAN with probe data & 0.71 & 44.97 \\
    ~~SLOGAN with probe data and mixup & 0.70 & 44.26 \\
    \midrule
    SLOGAN & 0.69 & 45.83 \\
    \bottomrule
\end{tabular}}}
\end{table}

Table \ref{tab:ablation} shows the ablation study on SLOGAN trained with CIFAR-2 (7:3). $\pi_{y=0}$ and $\pi_{y=1}$ represent the mixing coefficients of the latent components that correspond to the frogs and planes, respectively, and the ground-truth of $\pi_{y=0}$ is 0.7. 

\begin{table}[h!]
\centering
\caption{Effectiveness of U2C loss}
\label{tab:ablation_u2c_apndx}
{\resizebox{0.62\columnwidth}{!}
{\begin{tabular}{crccc}
    \toprule
    Dataset & Ablation & NMI $\uparrow$ & FID $\downarrow$ & ICFID $\downarrow$ \\
    \midrule
    MNIST-2 & SLOGAN w/o $\ell_\mathrm{U2C}$ & 0.25 & 4.62 & 9.43 \\
    (7:3) & SLOGAN & \textbf{0.92} & \textbf{4.02} & \textbf{5.91} \\
    \midrule
    \multirow{2}{*}{FMNIST-5} & SLOGAN w/o $\ell_\mathrm{U2C}$ & 0.14 & \textbf{5.27} & 43.15 \\
    & SLOGAN & \textbf{0.66} & 5.29 & \textbf{32.46} \\
    \midrule
    \multirow{2}{*}{CIFAR-2} & SLOGAN w/o $\ell_\mathrm{U2C}$ & 0.01 & 29.18 & 41.72 \\
    & SLOGAN & \textbf{0.78} & \textbf{28.99} & \textbf{35.68} \\
    \midrule
    CIFAR-2 & SLOGAN w/o $\ell_\mathrm{U2C}$ & 0.08 & 30.34 & 48.82 \\
    \footnotesize{(7:3)} & SLOGAN & \textbf{0.69} & \textbf{29.09} & \textbf{45.83} \\
    \midrule
    \multirow{2}{*}{CIFAR-10} & SLOGAN w/o $\ell_\mathrm{U2C}$ & 0.08 & 20.91 & 78.26 \\
    & SLOGAN & \textbf{0.34} & \textbf{20.61} & \textbf{71.23} \\
    \bottomrule
\end{tabular}}}
\end{table}

\paragraph{Factor analysis}
Rows 1-6 of Table \ref{tab:ablation} compare the performance depending on the factors affecting the performance of SLOGAN ($\boldsymbol{\mu}$, $\boldsymbol{\Sigma}$, $\boldsymbol{\rho}$ updates, and $\ell_\mathrm{U2C}$). We confirmed that SLOGAN with all the factors demonstrated the highest performance. Among the parameters of the latent distribution, the $\boldsymbol{\mu}$ update leads to the highest performance improvement. The intra-cluster Fr\'echet inception distance (ICFID) of SLOGAN without $\boldsymbol{\rho}$ update (the 5th row of Table \ref{tab:ablation}) indicates that SLOGAN outperformed existing unsupervised conditional GANs even when assuming a uniform distribution of the attributes.

\paragraph{Loss for $\boldsymbol{\rho}$ update} \label{sec:rho_update_apndx} 
We do not use U2C loss $\ell_\mathrm{U2C}$ to learn the mixing coefficient parameters $\boldsymbol{\rho}$. We construct U2C loss to approximate the negative mutual information $-I(C;\mathbf{x}_g)$ that can be decomposed into entropy and conditional entropy as follows:
\begin{gather}
\ell_\mathrm{U2C}(\mathbf{z}) \approx -I(C;\mathbf{x}_g)=H(C|\mathbf{x}_g)-H(C)
\end{gather}
The conditional entropy term reduces the uncertainty of the component from which the generated data are obtained. The entropy term promotes that component IDs are uniformly distributed. In terms of $\boldsymbol{\rho}$ update, the entropy term $H(C)$ drives $p(C)$ toward a discrete uniform distribution. Therefore, using $\ell_\mathrm{U2C}$ for learning $\boldsymbol{\rho}$ pulls $\boldsymbol{\pi}$ to a discrete uniform distribution and can hinder the learned $\boldsymbol{\pi}$ from accurately estimating the imbalance ratio inherent in the data. In the 9th row of Table \ref{tab:ablation}, we observed that the unsupervised conditional generation performance was undermined and the estimated imbalance ratio ($\pi_{y=0}$) was learned closer to a discrete uniform distribution ($0.5$) when $\ell_\mathrm{U2C}$ was used for $\boldsymbol{\rho}$ update.  

\paragraph{SimCLR analysis} \label{sec:simclr_analysis_apndx} 
For the colored image datasets, SLOGAN uses the SimCLR loss for the encoder with only fake (generated) data to further enhance the unsupervised conditional generation performance. The 10th to 12th rows of Table \ref{tab:ablation} show several ablation studies that analyzed the effect of SimCLR loss on SLOGAN. SLOGAN without SimCLR still showed at least approximately 35\% performance improvement compared to the existing unsupervised conditional GANs (ICFID of ClusterGAN: 75.52, CD-GAN: 76.91 in Table \ref{tab:imbalance}), even considering the fair computational cost and memory consumption. The SimCLR loss shows the highest performance improvement especially when applied only to fake data. SimCLR improved the performance by 7\%.

\paragraph{Attribute manipulation} \label{sec:attribute_manipulation_analysis_apndx}
As shown in the 13th and 14th rows of Table \ref{tab:ablation}, probe data significantly improved the performance of SLOGAN on CIFAR-2 (7:3) with 10 probe data for each latent component. We also confirmed that the mixup applied to the probe data further enhanced the overall performance of our model. Figure \ref{fig:probe_apndx} (a) shows the data generated from SLOGAN trained on CIFAR-2 (9:1) without the probe data. With extremely imbalanced attributes, SLOGAN mapped frog images with a white background onto the same component as airplanes in its latent space. When we use 10 probe data for each latent component, as shown in Figure \ref{fig:probe_apndx} (b), frogs with a white background were generated from the same latent component as the other frog images.

\begin{figure}[h!]
\centering
\includegraphics[width=1\columnwidth]{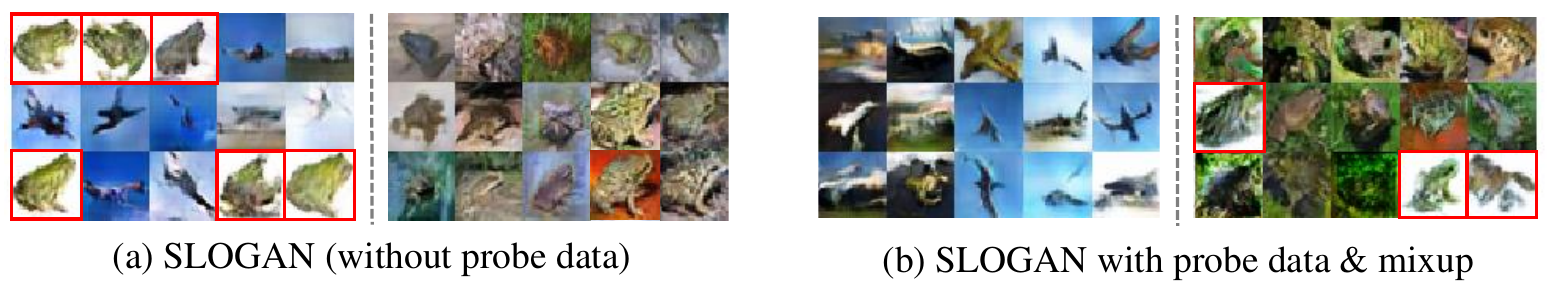} 
\caption{Effects of attribute manipulation on unsupervised conditional generation. Left and right images visualize generated images from different latent components. The red boxes indicate generated frog images with a white background.}
\label{fig:probe_apndx}
\end{figure}

\paragraph{Feature scale}
We introduced the feature scale $s$ described in Appendix \ref{sec:arcface_apndx} to reinforce the discriminative power of U2C loss. For the MNIST-2 (7:3) dataset, $s$ was set to 4. Such a parameter configuration is justified by a greedy search in $[0.5, 1, 2, 4, 8]$. The performances of SLOGAN on MNIST-2 (7:3) with different feature scales are shown in Table \ref{tab:ablation_scale}.

\begin{table}[h!]
\centering
\caption{Ablation study on feature scale}
\label{tab:ablation_scale}
{\resizebox{0.45\columnwidth}{!}
{\begin{tabular}{lccccc}
    \toprule
    $s$ & 0.5 & 1 & 2 & 4 & 8 \\
    \midrule
    ICFID $\downarrow$ & 14.18 & 17.03 & 6.65 & \textbf{5.91} & 33.98 \\
    \bottomrule
\end{tabular}}}
\end{table}

Intuitively, increasing the feature scale $s$ makes the samples generated from the same component closer to each other in the embedding space. From these results, we observed that the optimal choice of the temperature factor enhances the discriminative power of U2C loss.

\subsection{Statistical Significance}
We completed the statistical tests between SLOGAN and other methods in Tables \ref{tab:balance} and \ref{tab:imbalance}. Since the results of the experiment could not satisfy normality and homogeneity of variance, we used the Wilcoxon rank sum test \citepapndx{wilcoxon}. When p-value $> 0.05$, we measured the effect size using Cohen’s d \citepapndx{cohen}. We validated that all the experiments are statistically significant or showed large or medium effect sizes, with the exception of FID of InfoGAN vs. SLOGAN for FMNIST in Table \ref{tab:balance}, NMI of DeLiGAN+ vs. SLOGAN for FMNIST-5 in Table \ref{tab:imbalance}.

\subsection{Interpolation in Latent Space}
We also qualitatively show that the continuous nature of the prior distribution of SLOGAN makes superbly smooth interpolation possible in the latent space. In Figure \ref{fig:interpolation_apndx}, we visualize images generated from latent vectors obtained via interpolation among the mean vectors of the trained latent components. The generated images gradually changed to 3, 5, and 8 for the MNIST dataset, and t-shirt/top, pullover, and dress for the Fashion-MNIST dataset. In particular, we confirmed that the face images generated from the model trained with the CelebA data changed smoothly. The continuous attributes of real-world data are well mapped to the continuous latent space assumed by us, unlike most other methods using separated latent spaces induced via discrete latent codes.

\begin{figure*}[h!]
\centering
\includegraphics[width=0.8\linewidth]{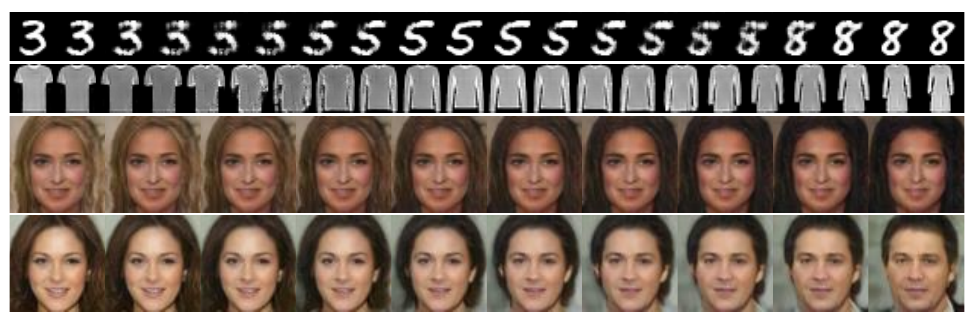} 
\caption{Interpolation in the latent space of the proposed method. For the MNIST and Fashion-MNIST datasets, we selected three mean vectors in the latent space and generated images from linearly interpolated latent vectors. For the CelebA dataset, we used 30 probe data and mixup for each latent component with attributes such as Black hair (3:1) and Male (1:1).}
\label{fig:interpolation_apndx}
\end{figure*}

\subsection{Benefits of ICFID}
DeLiGAN and ClusterGAN trained on the MNIST-2 (7:3) exhibited comparable FIDs to SLOGAN (DeLiGAN: 4.21, ClusterGAN: 4.25, and SLOGAN: 4.02); however they showed ICFIDs approximately four times higher (DeLiGAN: 25.34, ClusterGAN: 25.61, and SLOGAN: 5.91). From the data generated from each latent component of DeLiGAN and ClusterGAN in Figure \ref{fig:icfid1_apndx} (b) and (c), we confirm that the attributes were not learned well in the latent space of DeLiGAN. By contrast, from the data generated from each latent component of SLOGAN presented in Figure \ref{fig:icfid1_apndx} (a), SLOGAN successfully learned the attributes in its latent space. This shows that ICFID is useful for evaluating the performance of unsupervised conditional generation. In addition, ICFID can evaluate the diversity of images generated from a discrete latent code or mode because ICFID is based on FID. As shown in Figure \ref{fig:icfid2_apndx}, when a mode collapse occurs, the diversity of samples decreases drastically, and DeLiGAN trained on the CIFAR-2 (7:3) shows approximately twice the ICFID than those of InfoGAN and ClusterGAN (DeLiGAN: 186.31, InfoGAN: 88.49, and ClusterGAN: 75:52).


\begin{figure*}[h!]
\centering
\includegraphics[width=0.8\textwidth]{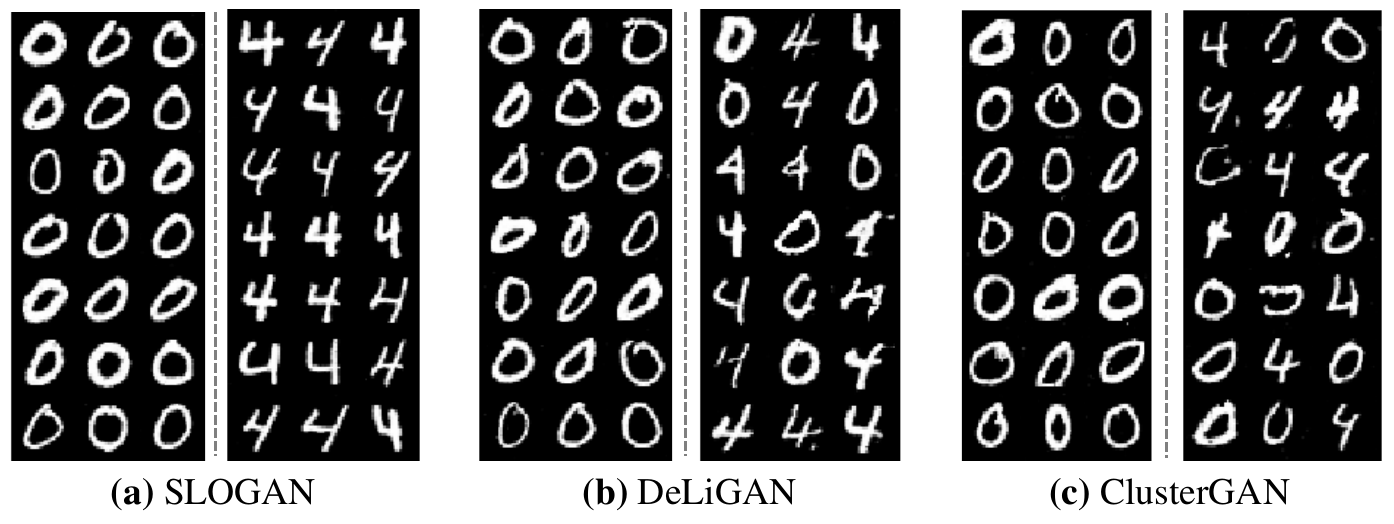} 
\caption{An example where ICFID is useful. The left and right images show generated images from each latent code of (a) SLOGAN, (b) DeLiGAN and (c) ClusterGAN trained on the MNIST-2 (7:3) dataset, respectively.}
\label{fig:icfid1_apndx}
\end{figure*}
\begin{figure*}[h!]
\centering
\includegraphics[width=0.65\textwidth]{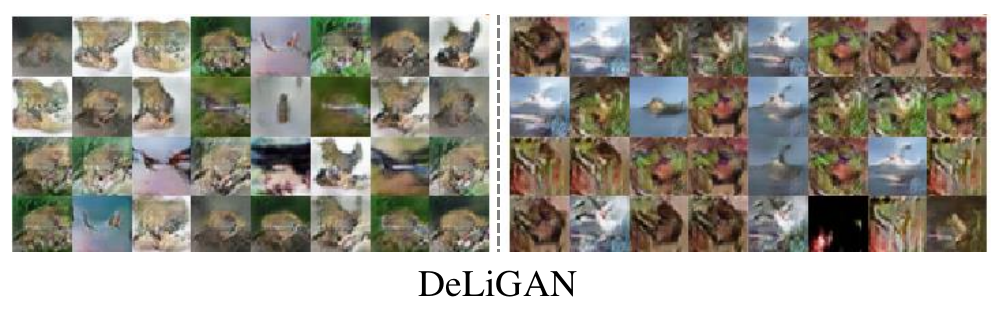} 
\caption{Another example where ICFID is useful. The left and right images show generated images from each latent component of DeLiGAN trained on the CIFAR-2 (7:3) dataset.}
\label{fig:icfid2_apndx}
\end{figure*}

\subsection{Qualitative results} \label{sec:generated_images_apndx}
\paragraph{Synthetic data}
Figures~\ref{fig:toy_apndx} and \ref{fig:toy_variance} shows the synthetic data and data generated by SLOGAN. Each color denotes generated data from each mixture component of the latent distribution. As the training progresses, each mixture component in the latent space is more strongly associated with each Gaussian distribution in the data space.

\paragraph{Generated images and latent spaces}
Figures \ref{fig:result1_apndx}, \ref{fig:result2_apndx}, \ref{fig:result8_apndx}, \ref{fig:celeba_apndx}, \ref{fig:celeba_hq_apndx}, the left plot of Figure \ref{fig:result4_apndx}, and the upper plot of Figure \ref{fig:result6_apndx} show the images generated from each latent component of SLOGAN trained on various datasets. Figure \ref{fig:result3_apndx}, the right plot of Figure \ref{fig:result4_apndx}, and the lower plot of Figure \ref{fig:result6_apndx} visualize 1,000 latent vectors of SLOGAN trained on the MNIST, Fashion-MNIST, MNIST-2 (7:3), and FMNIST-5 datasets using 3D principal component analysis (PCA). Each color represents the component with the highest responsibility, and each image shows the generated image from the latent vector. As shown in Figure \ref{fig:result3_apndx}, similar attributes (e.g., 4, 7, and 9) are mapped to nearby components in the latent space.

\paragraph{Comparisons with the most recent methods}
We compare our method with the most recent methods such as CD-GAN \citepapndx{cdgan_apndx} and PGMGAN \citepapndx{pgmgan_apndx} on the CIFAR-2 (7:3) dataset. From the results shown in Figure \ref{fig:recent_apndx}, we qualitatively confirm that SLOGAN learns imbalanced attributes of the dataset most robustly.

\paragraph{Highly imbalanced multi-class data}
We trained our method on highly imbalanced multi-class datasets by setting class 8 of the MNIST dataset to very low proportions of the other nine classes (e.g., 10:10:10:10:10:10:10:10:1:10 and 10:10:10:10:10:10:10:10:2:10). When class 8 is 0.1 fraction of the other nine classes, images of class 7 with a horizontal line outnumber images of class 8, and SLOGAN identifies 7 with a horizontal line as a more salient attribute than 8 as shown in the red boxes in Figure \ref{fig:mnist10_imb_apndx} (a). On the other hand, when class 8 is 0.2 fraction of the other nine classes, images of class 8 outnumber images of class 7 with a horizontal line. Therefore, SLOGAN successfully identifies 8 as a salient attribute as shown in the red box in Figure \ref{fig:mnist10_imb_apndx} (b).

\paragraph{Qualitative analysis with various imbalance ratios}
Figure \ref{fig:afhq_apndx} shows generated images from each latent component of SLOGAN trained on the AFHQ dataset. For various imbalance ratios of cats and dogs, we qualitatively analyze SLOGAN without using probe data. When the imbalance ratios are 1:1 and 1:2, SLOGAN identifies cat/dog as the most salient attribute and learned the attribute successfully as presented in Figure \ref{fig:afhq_apndx} (a) and (b). When the imbalance ratio is 1:5, SLOGAN discovers folded ears as the most salient attribute as shown in Figure \ref{fig:afhq_apndx} (c).

\vspace*{\fill}

\begin{figure*}[h!]
\centering
\includegraphics[width=0.99\textwidth]{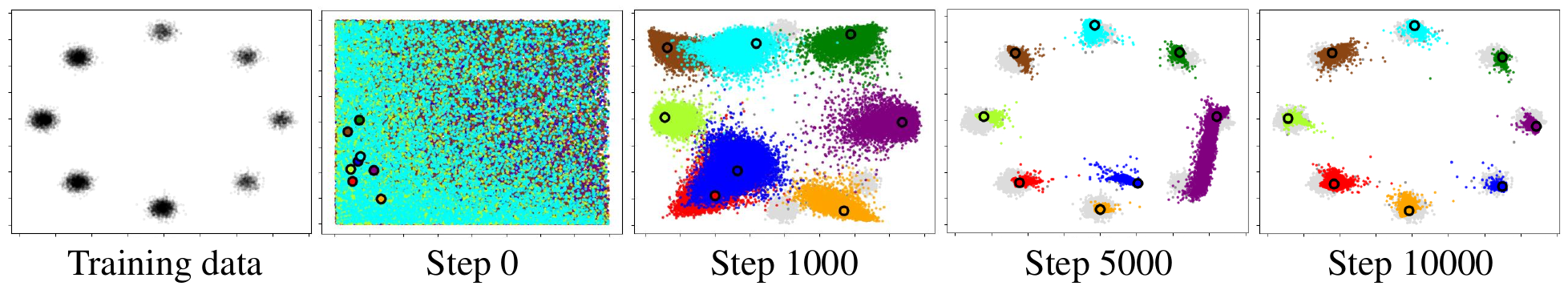} 
\caption{Synthetic dataset and samples generated by SLOGAN at 0, 1000, 5000, and 10000 steps.}
\label{fig:toy_apndx}
\end{figure*}

\vspace*{\fill}

\begin{figure*}[h!]
\centering
\includegraphics[width=0.8\textwidth]{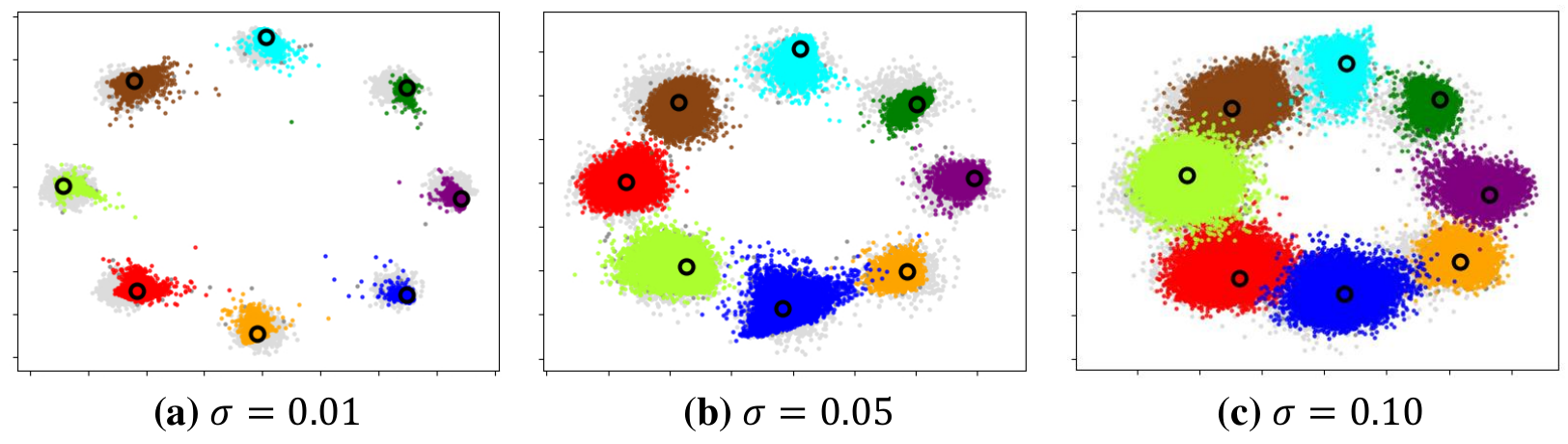} 
\caption{Generated samples from each latent component of SLOGAN trained on the synthetic datasets with variances (a) $0.01I$, (b) $0.05I$, and (c) $0.10I$.}
\label{fig:toy_variance}
\end{figure*}

\vspace*{\fill}

\begin{figure}[h!]
\centering
\includegraphics[width=0.95\textwidth]{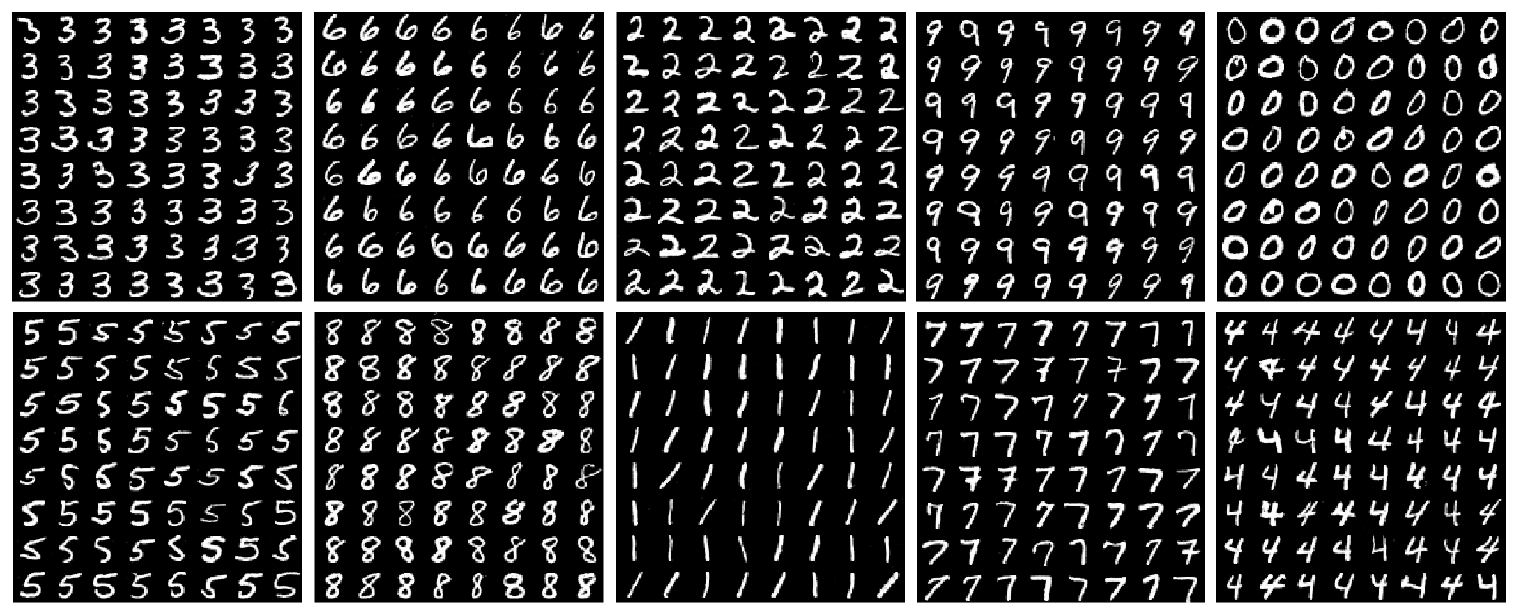} 
\caption{Generated images from each latent component of SLOGAN trained on the MNIST dataset.}
\label{fig:result1_apndx}
\end{figure}

\begin{figure*}[h!]
\centering
\includegraphics[width=0.95\textwidth]{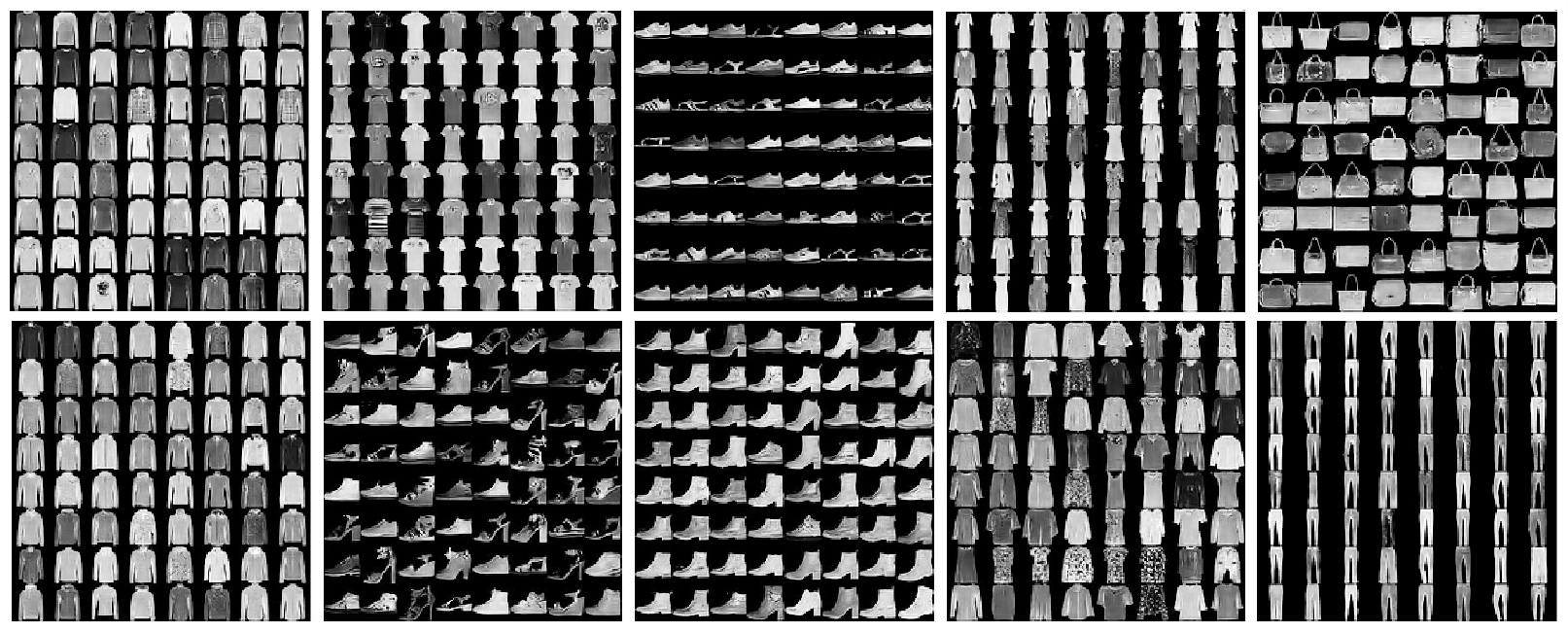} 
\caption{Generated images from each latent component of SLOGAN trained on the Fashion-MNIST dataset.}
\label{fig:result2_apndx}
\end{figure*}

\begin{figure*}[h!]
\centering
\includegraphics[width=0.95\textwidth]{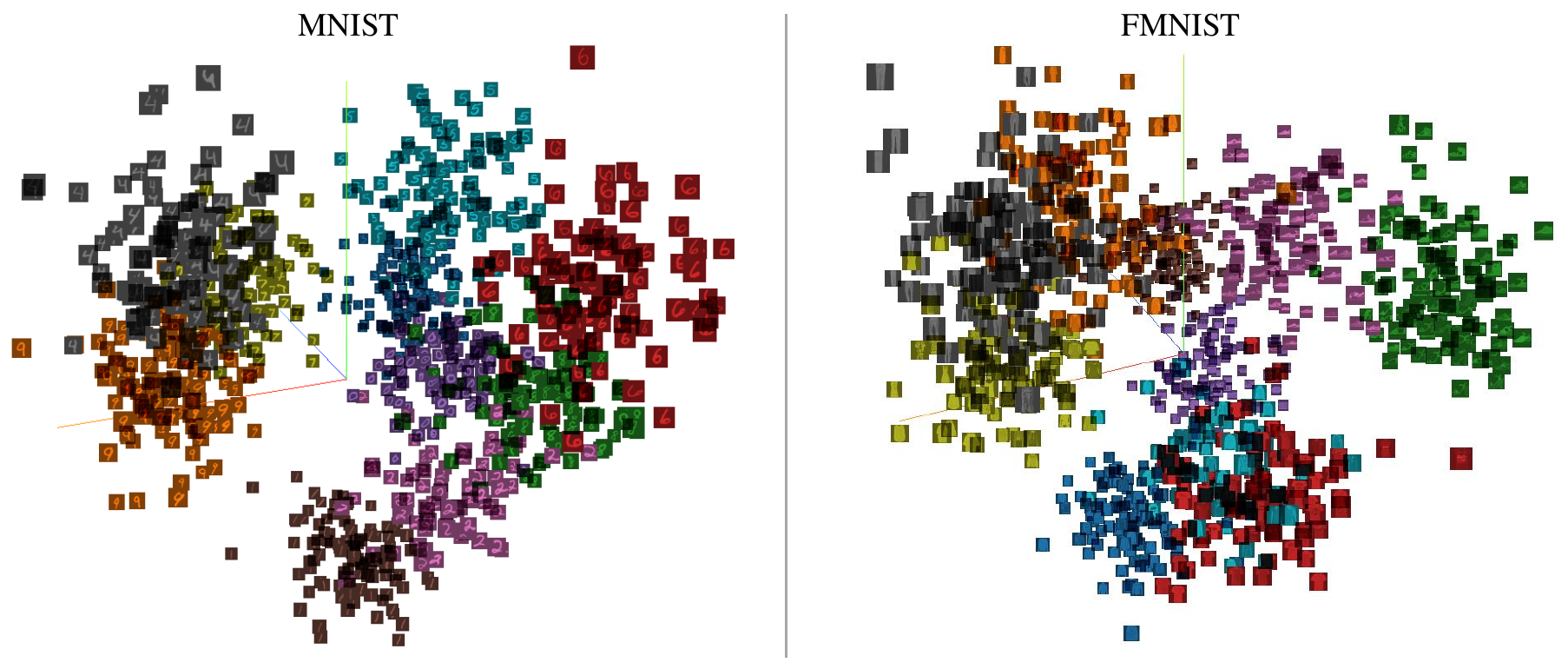} 
\caption{3D PCA of the latent spaces of SLOGAN trained on the MNIST and Fashion-MNIST datasets.}
\label{fig:result3_apndx}
\end{figure*}

\begin{figure*}[h!]
\centering
\includegraphics[width=0.95\textwidth]{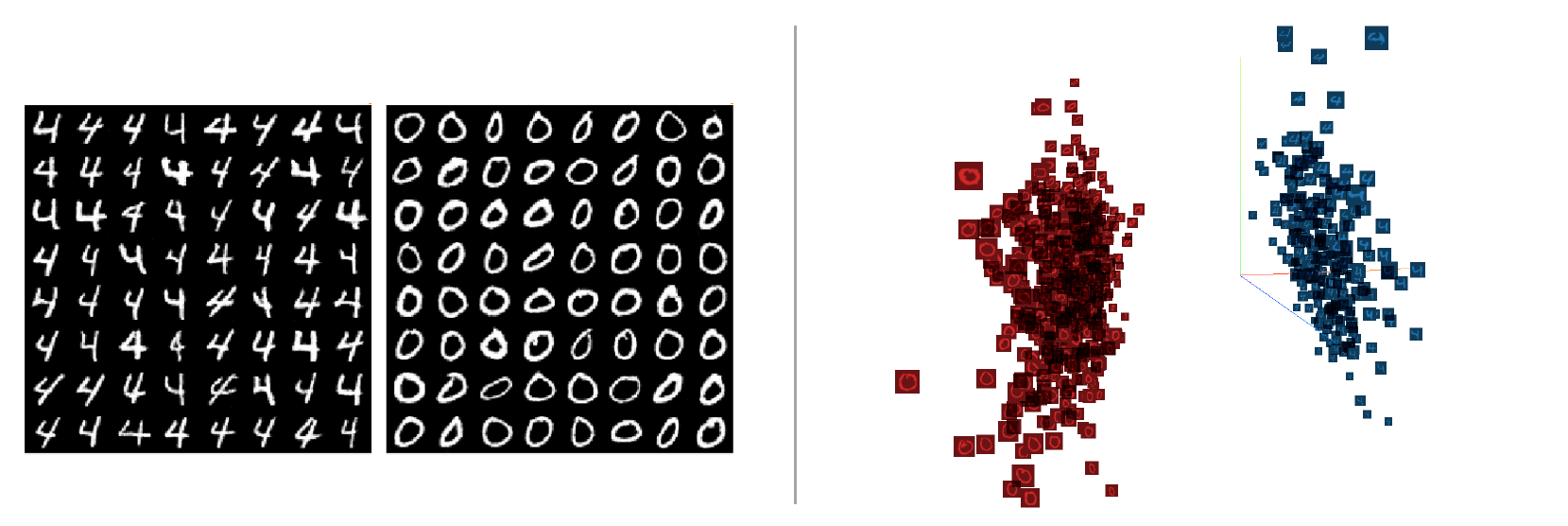} 
\caption{Generated images from each latent component and 3D PCA of the latent spaces of SLOGAN trained on the MNIST-2 (7:3) dataset.}
\label{fig:result4_apndx}
\end{figure*}

\begin{figure*}[h!]
\centering
\includegraphics[width=0.95\textwidth]{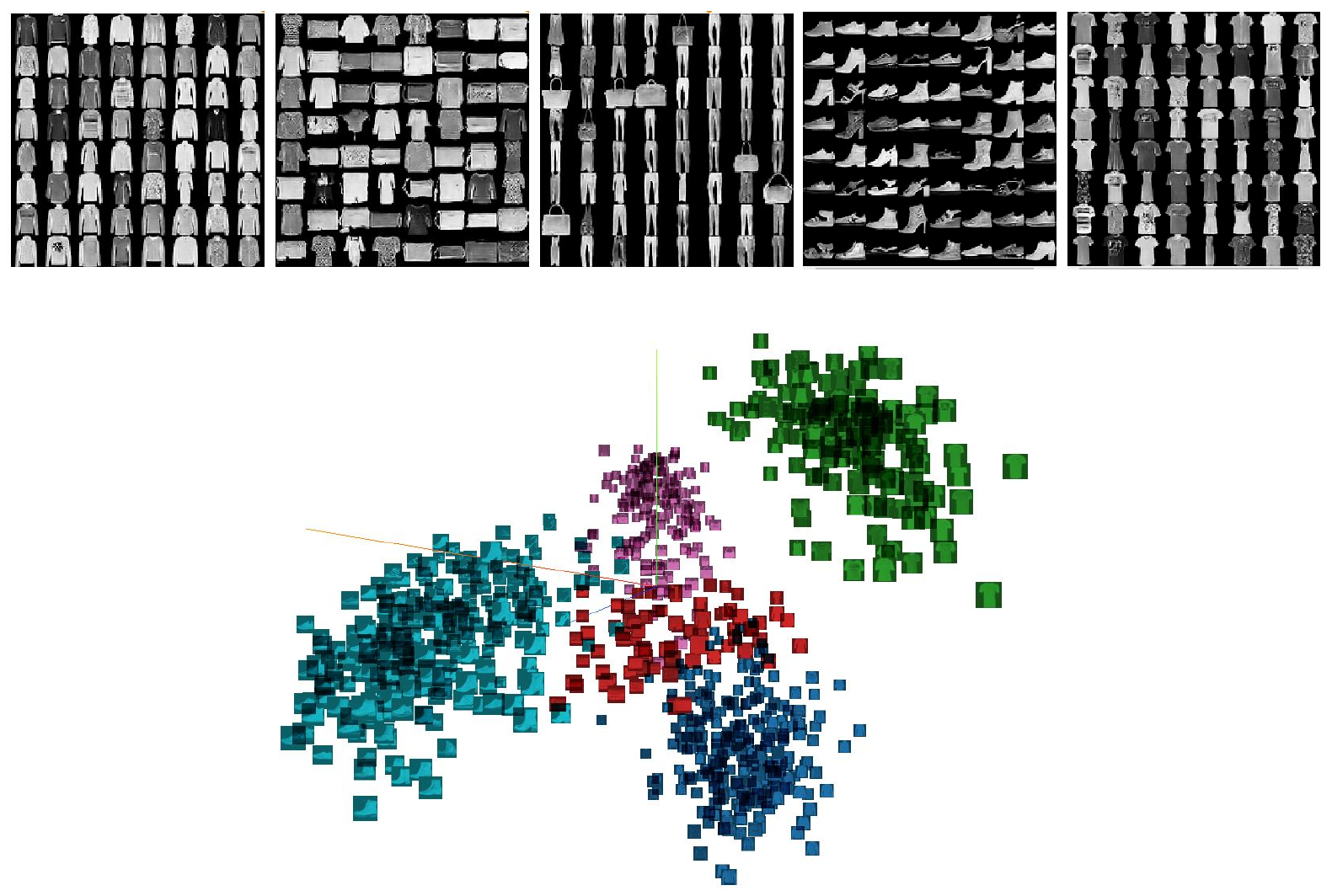} 
\caption{Generated images from each latent component and 3D PCA of the latent space of SLOGAN trained on the FMNIST-5 dataset.}
\label{fig:result6_apndx}
\end{figure*}

\begin{figure*}[h!]
\centering
\includegraphics[width=1\textwidth]{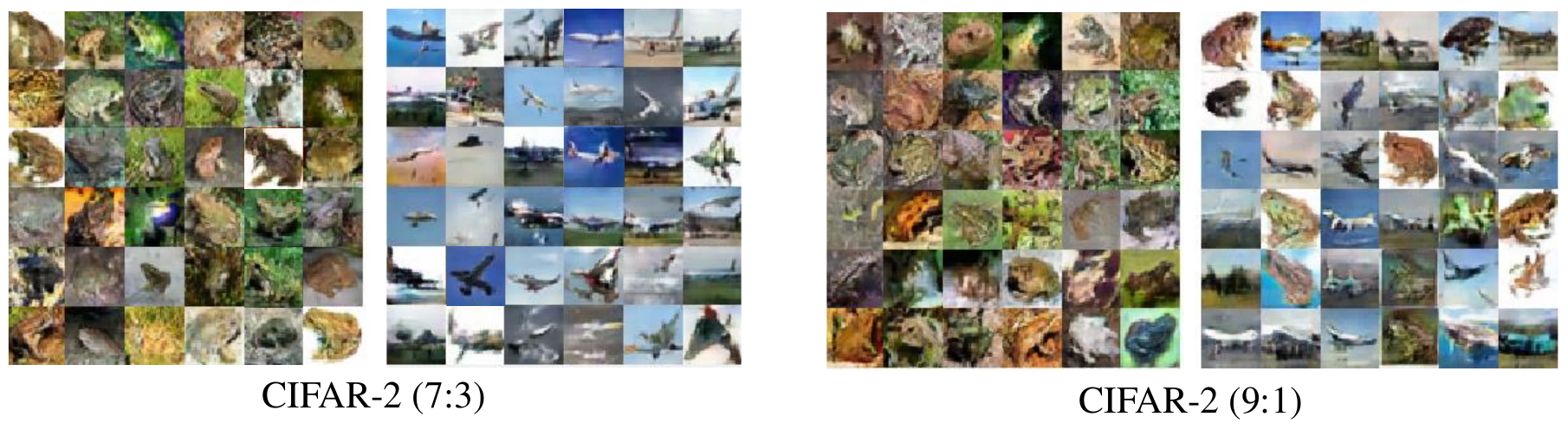} 
\caption{Generated images from each latent component of SLOGAN trained on the CIFAR-2 (7:3) and CIFAR-2 (9:1) datasets.}
\label{fig:result8_apndx}
\end{figure*}
\clearpage
\vspace*{\fill}
\begin{figure*}[h!]
\centering
\includegraphics[width=0.8\textwidth]{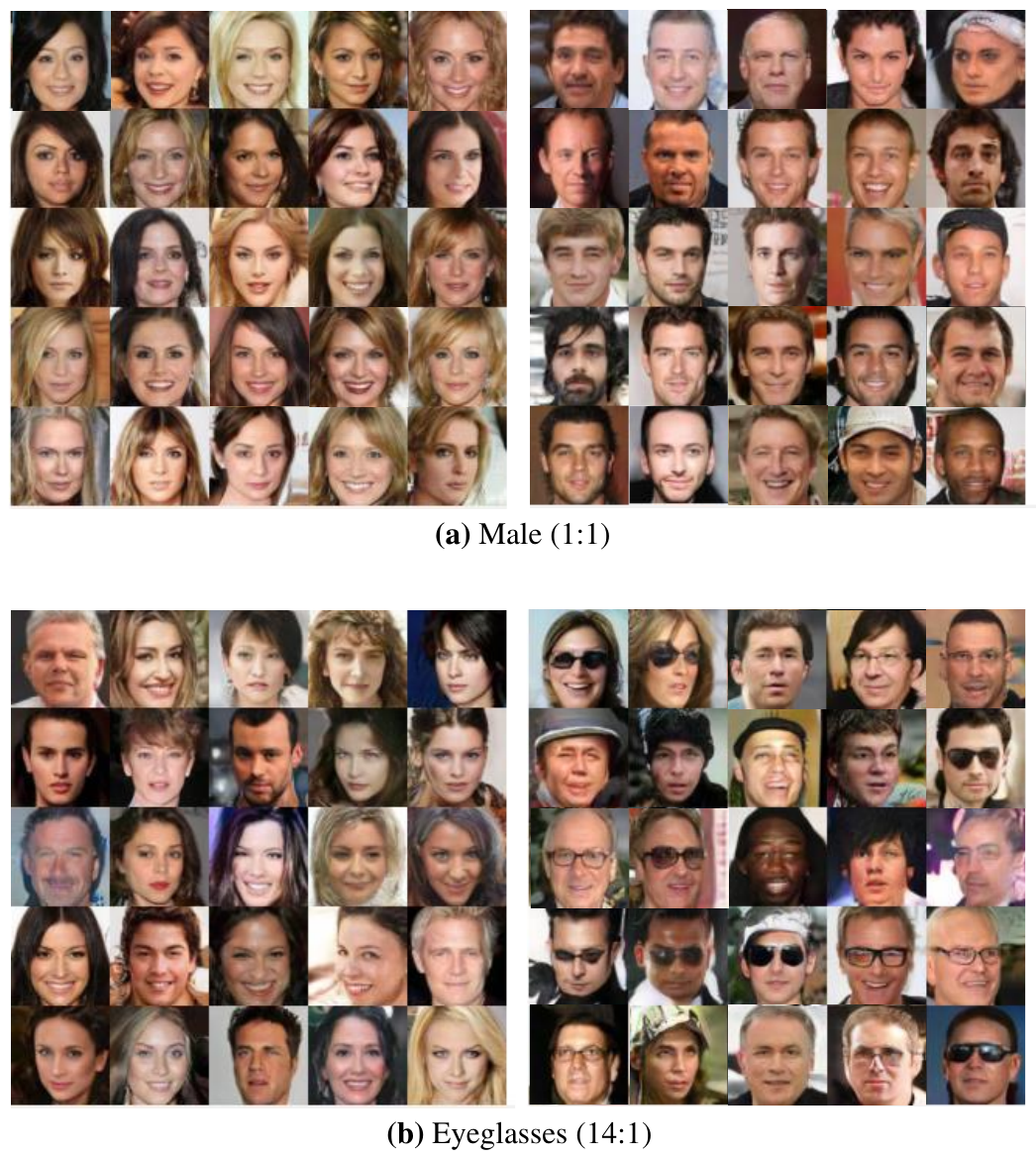} 
\caption{Generated images from each latent component of SLOGAN trained on the CelebA dataset. We used 30 probe data ((a) Female vs. Male, or (b) Faces without eyeglasses vs. Faces with eyeglasses) and mixup for each component.}
\label{fig:celeba_apndx}
\end{figure*}

\vspace*{\fill}
\clearpage
\vspace*{\fill}

\begin{figure*}[h!]
\centering
\includegraphics[width=0.9\textwidth]{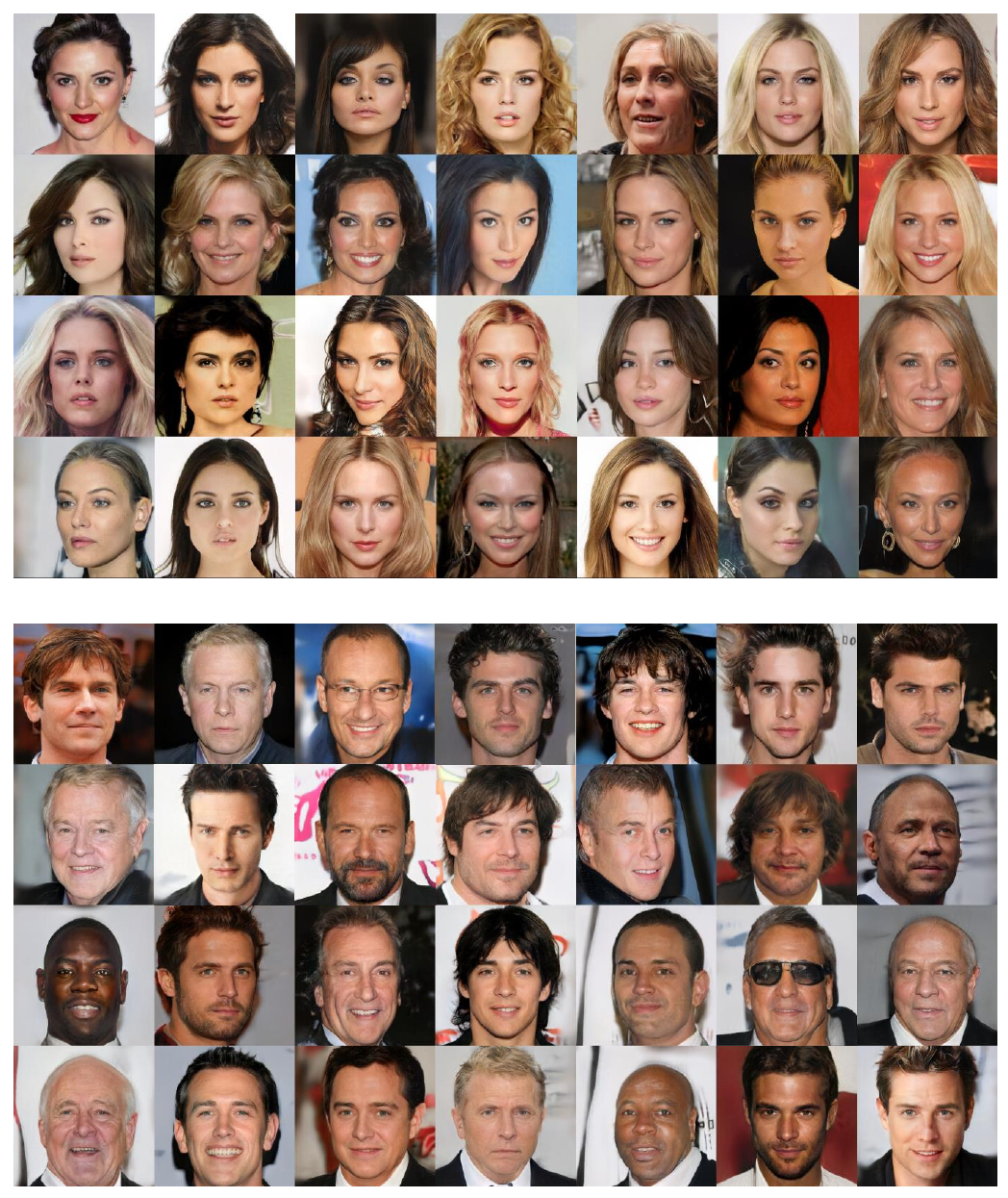} 
\caption{Generated images from each latent component of SLOGAN trained on the CelebA-HQ (256$\times$256) dataset. We used 30 probe data (Female vs. Male) and mixup for each component.}
\label{fig:celeba_hq_apndx}
\end{figure*}

\vspace*{\fill}
\clearpage
\vspace*{\fill}

\begin{figure*}[h!]
\centering
\includegraphics[width=1\textwidth]{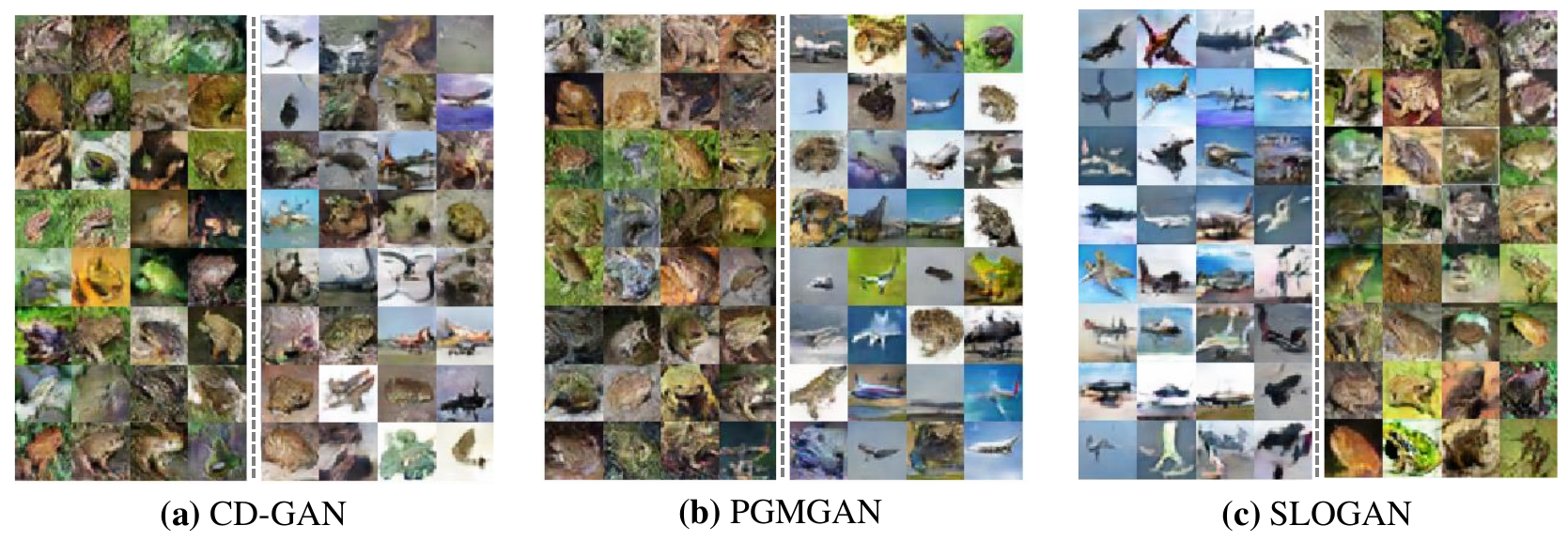} 
\caption{Generated images from the most recent methods including (a) CD-GAN, (b) PGMGAN, and (c) SLOGAN trained on the CIFAR-2 (7:3) dataset.}
\label{fig:recent_apndx}
\end{figure*}

\begin{figure*}[h!]
\centering
\includegraphics[width=1\textwidth]{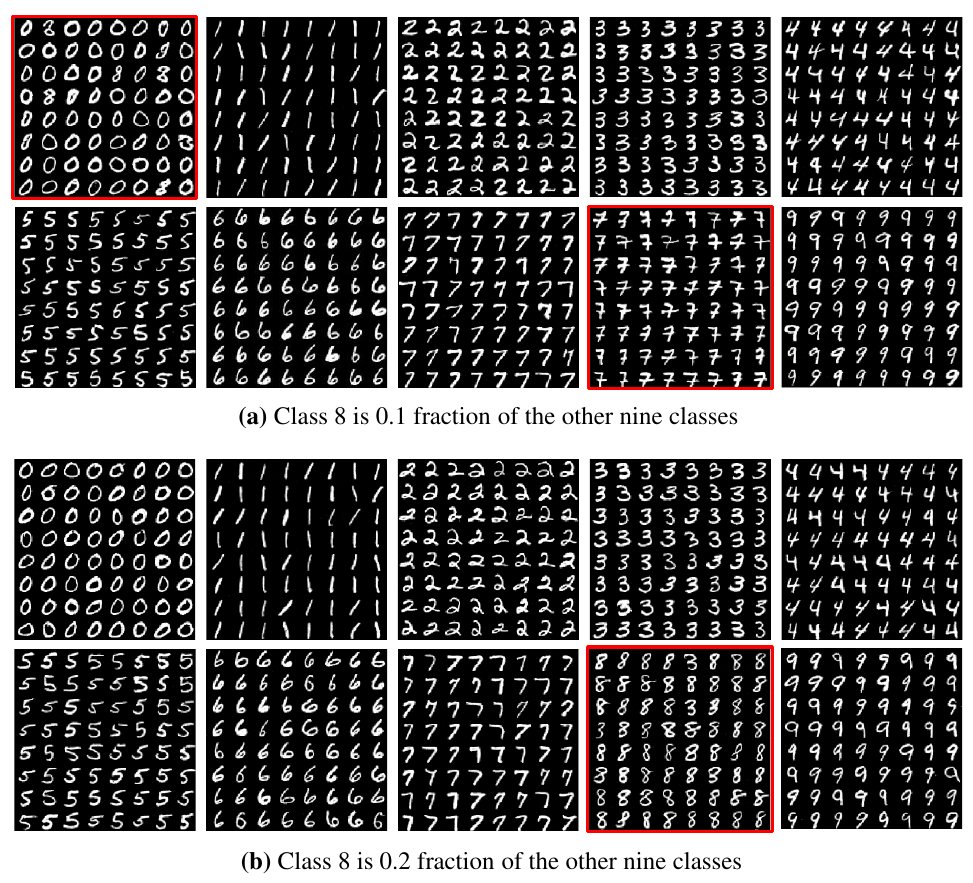} 
\caption{Generated images from each latent component of SLOGAN trained on the MNIST dataset where class 8 is very low fraction of the other nine classes.}
\label{fig:mnist10_imb_apndx}
\end{figure*}

\vspace*{\fill}
\clearpage
\vspace*{\fill}

\begin{figure*}[h!]
\centering
\includegraphics[width=1\textwidth]{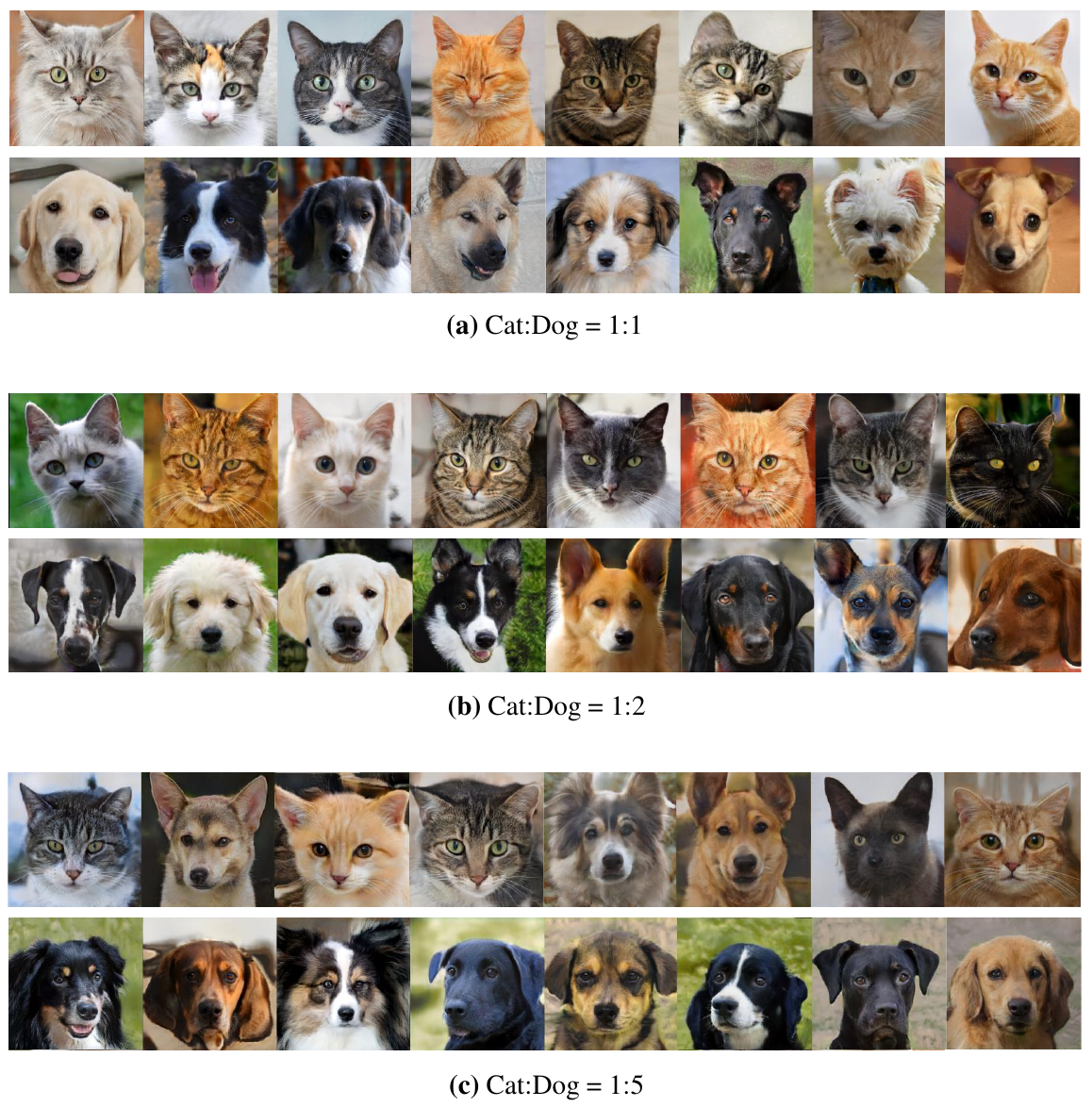} 
\caption{Generated images from each latent component of SLOGAN trained on Cats and Dogs of the AFHQ (256$\times$256) dataset with various imbalance ratios.}
\label{fig:afhq_apndx}
\end{figure*}

\vspace*{\fill}
\clearpage

\section{Additional Background}
Our work is closely related to Stein's lemma and the reparameterization trick for Gaussian mixture and also related to several topics of GAN studies such as representation learning, supervised/unsupervised conditional generation, and Gaussian mixture prior.

\subsection{Supervised conditional generation}
Conditional GANs including ACGAN \citepapndx{acgan_apndx}, projection discriminator \citepapndx{projection_apndx}, and ContraGAN \citepapndx{contragan_apndx} have led to state-of-the-art performances in conditional image generation. However, these conditional GANs are trained with supervision and require a large amount of labeled data.

\subsection{Unsupervised conditional generation} \label{sec:related_apndx}
InfoGAN \citepapndx{infogan_apndx} introduces latent codes composed of categorical and continuous variables and derives the lower bound of mutual information between the latent codes and representations. By maximizing the lower bound, InfoGAN learns disentangled latent variables. ClusterGAN \citepapndx{clustergan_apndx} assumes a discrete-continuous Gaussian prior wherein discrete variables are defined as a one-hot vector and continuous variables are sampled from a Gaussian distribution. The latent distribution is clustered in an unsupervised manner through reconstruction
loses for discrete and continuous variables. CD-GAN \citepapndx{cdgan_apndx} is similar to ClusterGAN, but it uses contrastive loss to disentangle attributes. These methods assume uniform distribution of the attributes, and if the imbalance ratio is unknown, these models cannot thoroughly learn imbalanced attributes.

Recently, unsupervised conditional GANs which do not assume uniform distribution of the attributes have been proposed. Self-conditioned GAN \citepapndx{scgan_apndx} performs unsupervised conditional generation by using clustering of discriminator features as labels. However, it has no loss to facilitate clustering of discriminator features, and the performance seems to be largely influenced by the architecture of the discriminator. PGMGAN \citepapndx{pgmgan_apndx} applies a contrastive clustering method named SCAN \citepapndx{scan_apndx} to perform unsupervised conditional generation. The pretrained space partitioner and a penalization loss function are used to encourage the generator to generate images with salient attributes. To avoid degenerated clusters where one partition contains most of the data, PGMGAN uses an entropy term that regularizes the average cluster probability vector to a uniform distribution. However, when learning PGMGAN on datasets with imbalanced attributes, it is difficult to adjust the coefficient of the regularizer because the performance seems to be sensitive to the strength of the regularizer in practice (e.g., the results on CIFAR-2 (7:3) and (9:1) in Table \ref{tab:imbalance_apndx}). In addition, clustering results does not seem to be reliable when the space partitioner is trained on datasets where transformations are not available (e.g., the result on 10x$\_$73k in Table \ref{tab:imbalance_apndx}).

NEMGAN \citepapndx{nemgan_apndx} argues that it considers the imbalance problem of attributes. However, it assumes that some labeled samples are provided, which is unrealistic in real-world scenarios. In addition, labeled samples should have the same imbalance ratio as the training data to estimate the imbalance ratio. If these samples are not given, NEMGAN is the same method as ClusterGAN. On the other hand, SLOGAN does not require labeled samples to learn imbalanced attributes. Even when a small amount of probe data is used to learn specific attributes, imbalanced attributes can be learned with balanced probe data (e.g., 30 male and 30 female faces on the CelebA-HQ dataset).

\subsection{Representation learning}
Representation learning refers to the discovery of meaningful semantics in datasets \citepapndx{bengio2013representation_apndx}. Perceptually salient and semantically meaningful representations can induce better performance in downstream tasks. In particular, Representation learning in generative models enables the learned latent representations to be semantically meaningful without labels in the training data. BiGAN \citepapndx{bigan_apndx}, ALI \citepapndx{ali_apndx}, and their variants \citepapndx{bicogan_apndx, hierarchical_ali_apndx} are similar to our study in that they add an encoder network to the original GAN framework. The additional encoder may serve as a inversion mapping of the generator, and learned feature representation is useful for supervised tasks. However, the learned generator cannot perform conditional generation without supervision.

\newpage

\section{Proofs}
\subsection{Gradient Identity for \texorpdfstring{$\boldsymbol{\mu}_c$}{}} \label{sec:bonnet_apndx}
\begin{thm} \label{th:bonnet_apndx}
    Given an expected loss of the generator $\mathcal{L}$ and a loss function for a sample $\ell(\cdot): \mathbb{R}^{d_z} \mapsto \mathbb{R}$, we assume $\ell$ is continuously differentiable. Then, the following identity holds:
    \begin{gather}
        \nabla_{\boldsymbol{\mu}_c} \mathcal{L} =  \mathbb{E}_{q} \left[ \delta(\mathbf{z})_c \pi_c \nabla_{\mathbf{z}} \ell(\mathbf{z}) \right] \label{eq:bonnet_apndx}
    \end{gather}
\end{thm}
\begin{proof}
To prove Theorem \ref{th:bonnet_apndx}, the following lemma (Bonnet's theorem) is introduced.
\begin{lemma} \label{le:bonnet_apndx}
    Let $h(\mathbf{z}): \mathbb{R}^{d} \mapsto \mathbb{R}$ be continuously differentiable. $q(\mathbf{z})$ is a multivariate Gaussian distribution $\mathcal{N}(\mathbf{z};\boldsymbol{\mu}, \boldsymbol{\Sigma})$. Then, the following identity holds:
    \begin{gather}
        \nabla_{\boldsymbol{\mu}} \mathbb{E}_{q(\mathbf{z})} \left[ h(\mathbf{z}) \right] = \mathbb{E}_{q(\mathbf{z})} \left[ \nabla_\mathbf{z} h(\mathbf{z}) \right]
    \end{gather}
\end{lemma}
The proof of Lemma \ref{le:bonnet_apndx} is described by Theorem 3 of \citetapndx{lin2019stein_apndx}. Using Lemma \ref{le:bonnet_apndx}, we show that
\begin{align}
    \nabla_{\boldsymbol{\mu}_c} \mathcal{L} &= \nabla_{\boldsymbol{\mu}_c} \mathbb{E}_{q(\mathbf{z})} \left[ \ell(\mathbf{z}) \right]
    = \nabla_{\boldsymbol{\mu}_c} \sum_{i=1}^K p(i) \mathbb{E}_{q(\mathbf{z}|i)} \left[ \ell(\mathbf{z}) \right] \\
    &= p(c) \nabla_{\boldsymbol{\mu}_c} \mathbb{E}_{q(\mathbf{z}|c)} \left[ \ell(\mathbf{z}) \right]
    = p(c)\mathbb{E}_{q(\mathbf{z}|c)} \left[ \nabla_\mathbf{z} \ell(\mathbf{z}) \right] \\
    &= \int q(\mathbf{z}|c)p(c) \nabla_\mathbf{z} \ell(\mathbf{z}) d\mathbf{z}  \\
    &= \int q(\mathbf{z}) \frac{q(\mathbf{z}|c)p(c)}{q(\mathbf{z})} \nabla_\mathbf{z} \ell(\mathbf{z}) d\mathbf{z} \\ 
    &= \int q(\mathbf{z}) \delta(\mathbf{z})_c \pi_c \nabla_\mathbf{z} \ell(\mathbf{z}) d\mathbf{z} \\
    &= \mathbb{E}_{q(\mathbf{z})} \left[ \delta(\mathbf{z})_c \pi_c \nabla_{\mathbf{z}} \ell(\mathbf{z}) \right]
\end{align}
\end{proof}

\subsection{First-order Gradient Identity for \texorpdfstring{$\boldsymbol{\Sigma}_c$}{}} \label{sec:price_apndx}
\begin{thm} \label{th:price_apndx}
    With the same assumptions from Theorem \ref{th:bonnet_apndx}, we have the following gradient identity:
    \begin{gather}
        \nabla_{\Sigma_c} \mathcal{L} = \frac{1}{2} \mathbb{E}_{q} \left[ \delta(\mathbf{z})_c \pi_c \Sigma_c^{-1} \left( \mathbf{z}-\boldsymbol{\mu}_c \right) \nabla_{\mathbf{z}}^{T} \ell(\mathbf{z}) \right] \label{eq:price_apndx}
    \end{gather}
\end{thm}
\begin{proof}
In order to prove Theorem \ref{th:price_apndx}, we introduce the following lemma (Price's theorem).
\begin{lemma} \label{le:price_apndx}
    Let $h(\mathbf{z}): \mathbb{R}^{d} \mapsto \mathbb{R}$ and its derivative $\nabla h(\mathbf{z})$ be continuously differentiable. We further assume that $\mathbb{E}\left[ h(\mathbf{z}) \right]$ is well-defined. Then, the following identity holds:
    \begin{gather}
        \nabla_{\boldsymbol{\Sigma}} \mathbb{E}_{q(\mathbf{z})} \left[ h(\mathbf{z}) \right] = \frac{1}{2} \mathbb{E}_{q(\mathbf{z})} \left[ \boldsymbol{\Sigma}^{-1} (\mathbf{z}-\boldsymbol{\mu})\nabla_\mathbf{z}^{T} h(\mathbf{z}) \right] = \frac{1}{2} \mathbb{E}_{q(\mathbf{z})} \left[ \nabla^2_\mathbf{z} h(\mathbf{z}) \right]
    \end{gather}
\end{lemma}
The proof of Lemma \ref{le:price_apndx} is presented in Theorem 4 of \citetapndx{lin2019stein_apndx}. The rest of the proof is similar with the proof of Theorem \ref{th:bonnet_apndx}. Using the first-order gradient identity of Lemma \ref{le:price_apndx}, we get
\begin{align}
    \nabla_{\Sigma_c} \mathcal{L} &= \nabla_{\Sigma_c} \mathbb{E}_{q(\mathbf{z})} \left[ \ell(\mathbf{z}) \right] 
    = \nabla_{\Sigma_c} \sum_{i=1}^K p(i) \mathbb{E}_{q(\mathbf{z}|i)} \left[ \ell(\mathbf{z}) \right] \\
    &= p(c) \nabla_{\Sigma_c} \mathbb{E}_{q(\mathbf{z}|c)} \left[ \ell(\mathbf{z}) \right]
    = \frac{1}{2} p(c) \mathbb{E}_{q(\mathbf{z}|c)} \left[ \Sigma_c^{-1} (\mathbf{z}-\boldsymbol{\mu}_c)\nabla_\mathbf{z}^{T} \ell(\mathbf{z}) \right] \\
    &= \frac{1}{2} \int q(\mathbf{z}|c)p(c) \Sigma_c^{-1} (\mathbf{z}-\boldsymbol{\mu}_c)\nabla_\mathbf{z}^{T} \ell(\mathbf{z}) d\mathbf{z}  \\
    &= \frac{1}{2} \int q(\mathbf{z}) \frac{q(\mathbf{z}|c)p(c)}{q(\mathbf{z})} \Sigma_c^{-1} (\mathbf{z}-\boldsymbol{\mu}_c)\nabla_\mathbf{z}^{T} \ell(\mathbf{z}) d\mathbf{z} \\ 
    &= \frac{1}{2} \int q(\mathbf{z}) \delta(\mathbf{z})_c \pi_c \Sigma_c^{-1} (\mathbf{z}-\boldsymbol{\mu}_c)\nabla_\mathbf{z}^{T} \ell(\mathbf{z}) d\mathbf{z} \\
    &= \frac{1}{2} \mathbb{E}_{q(\mathbf{z})} \left[ \delta(\mathbf{z})_c \pi_c \Sigma_c^{-1} (\mathbf{z}-\boldsymbol{\mu}_c)\nabla_\mathbf{z}^{T} \ell(\mathbf{z}) \right]
\end{align}
\end{proof}

\subsection{Ensuring Positive-definiteness of \texorpdfstring{$\boldsymbol{\Sigma}_c$}{}} \label{sec:psd_apndx}
\begin{thm} \label{th:psd_apndx}
    The updated covariance matrix $\Sigma_{c}'=\Sigma_{c}+\gamma\Delta\Sigma_{c}'$ with the modified update rule specified in Equation \ref{eq:psd} is positive-definite if $\Sigma_{c}$ is positive-definite.
\end{thm}
\begin{proof}
Because $\Sigma_{c}$ is symmetric and positive-definite, we can decompose $\Sigma_{c} = \mathrm{L}\mathrm{L}^{T}$ using Cholesky decomposition, where $\mathrm{L}$ is the lower triangular matrix. Then, we can prove the positive-definiteness of the updated covariance matrix as follows:
\begin{align}
    \Sigma_{c}' &= \Sigma_{c}+\gamma\Delta\Sigma_{c}' \\
    &= \Sigma_{c}+\gamma (\Delta\Sigma_c + \frac{\gamma}{2} \Delta\Sigma_c \Sigma_c^{-1} \Delta\Sigma_c) \\
    &= \Sigma_{c}+\gamma\Delta\Sigma_c + \frac{\gamma^2}{2} \Delta\Sigma_c \Sigma_c^{-1} \Delta\Sigma_c \\
    &= \frac{1}{2} \left( \Sigma_{c}+ (\mathrm{L} + \gamma\Delta\Sigma_c\mathrm{L}^{-T}) (\mathrm{L}^T + \gamma \mathrm{L}^{-1} \Delta\Sigma_c) \right) \\
\end{align}
Let us define $\mathrm{U} \vcentcolon= \mathrm{L}^T + \gamma \mathrm{L}^{-1} \Delta\Sigma_c$. Then, we have the following:
\begin{align}
    \Sigma_{c}' &= \frac{1}{2} \left( \Sigma_{c} + \mathrm{U}^T \mathrm{U} \right) \succ 0
\end{align}
where both $\Sigma_{c}$ and $\mathrm{U}^T \mathrm{U}$ are positive-definite, concluding the proof.
\end{proof}

\subsection{Gradient Identity for \texorpdfstring{$\boldsymbol{\rho}_c$}{}} \label{sec:rho_apndx}
\begin{thm} \label{th:rho_apndx}
    Let $\rho_c$ be a mixing coefficient parameter, and the following gradient identity holds:
    \begin{gather}
        \nabla_{\rho_c} \mathcal{L} = \mathbb{E}_{q} \left[ \pi_c \left(\delta(\mathbf{z})_c - 1\right) \ell(\mathbf{z}) \right] 
    \end{gather}
\end{thm}
\begin{proof}
The gradient of the latent distribution with respect to the mixing coefficient parameter is derived as follows:
\begin{align}
    \nabla_{\rho_c} q(\mathbf{z}) &= \nabla_{\rho_c} \sum_{i=1}^K \mathrm{softmax} (\rho_i) q(\mathbf{z}|i)
    = \pi_c \left( q(\mathbf{z}|c) - \sum_{i=1}^K \pi_i q(\mathbf{z}|i) \right)
\end{align}
where $\mathrm{softmax} (\cdot)$ denotes the softmax function (e.g., $p(i)=\pi_i=\mathrm{softmax} (\rho_i)$). Using the above equation, we have
\begin{align}
    \nabla_{\rho_c} \mathcal{L} &= \nabla_{\rho_c} \mathbb{E}_{q(\mathbf{z})} \left[ \ell(\mathbf{z}) \right] = \int \nabla_{\rho_c} q(\mathbf{z}) \ell(\mathbf{z}) d\mathbf{z} \\
    &= \int \pi_c \left( q(\mathbf{z}|c) - \sum_{i=1}^K \pi_i q(\mathbf{z}|i) \right) \ell(\mathbf{z}) d\mathbf{z} \\
    &= \int \pi_c q(\mathbf{z}) \left( \frac{q(\mathbf{z}|c)}{q(\mathbf{z})} - \sum_{i=1}^K \pi_i \frac{q(\mathbf{z}|i)}{q(\mathbf{z})} \right) \ell(\mathbf{z}) d\mathbf{z} \\
    &= \int q(\mathbf{z}) \pi_c \left( \delta(\mathbf{z})_c - \sum_{i=1}^K \pi_i \delta(\mathbf{z})_i \right) \ell(\mathbf{z}) d\mathbf{z} \label{eq:rho_apndx}
\end{align}
Here, $\sum_{i=1}^K \pi_i \delta(\mathbf{z})_i = \sum_{i=1}^K \frac{p(i)q(\mathbf{z}|i)}{q(\mathbf{z})}=1$. Plugging this back into Equation \ref{eq:rho_apndx}, we obtain
\begin{align}
    \nabla_{\rho_c} \mathcal{L} &= \int q(\mathbf{z}) \pi_c \left( \delta(\mathbf{z})_c - 1 \right) \ell(\mathbf{z}) d\mathbf{z} \\
    &= \mathbb{E}_{q(\mathbf{z})} \left[ \pi_c \left( \delta(\mathbf{z})_c - 1 \right) \ell(\mathbf{z}) \right]
\end{align}
\end{proof}

\subsection{Second-order Gradient Identity for \texorpdfstring{$\boldsymbol{\Sigma}_c$}{}}
We consider a generator $\sigma(G(\mathbf{z}))=\sigma(G)=\boldsymbol{\sigma}_G$, a discriminator $D(\boldsymbol{\sigma}_G)=D$ with the piecewise-linear activation functions (e.g. ReLU or LeakyReLU) except the activation function of the output layer of the generator which is the hyperbolic tangent function $\sigma(x)=(e^x-e^{-x})/(e^x+e^{-x})$, and a Wasserstein adversarial loss function for each latent vector $\ell(\mathbf{z})=-D(\sigma(G(\mathbf{z})))=\ell$. We note that the following holds except on a set of zero Lebesgue measure because the piecewise-linear activation functions are linear except where they switch:
\begin{align}
    \frac{\partial^2 D}{\partial {\boldsymbol{\sigma}_G}^2} = 0, \;\;\;
    \frac{\partial^2 G}{\partial {\mathbf{z}}^2} = 0
\end{align}
We provide the second-order gradient identity of the Wasserstein GAN loss for the covariance matrix $\boldsymbol{\Sigma}_c$ that does not compute second-order derivatives. However, it is impractical and not included in our method because of the excessive computational cost of Jacobian matrices.
\begin{thm} \label{th:price2_apndx}
   Given the Wasserstein GAN loss for the generator $\mathcal{L}$ and a loss function for a sample $\ell(\mathbf{z}): \mathbb{R}^{d} \mapsto \mathbb{R}$, we assume $\ell$ and its derivative $\nabla \ell(\mathbf{z})$ are continuously differentiable. We further assume that $\mathbb{E}\left[ \ell(\mathbf{z}) \right]$ is well-defined. Then, the following identity holds:
    \begin{gather*}
        \nabla_{\Sigma_c} \mathcal{L} = \nabla_{\Sigma_c} \mathbb{E}_{q(\mathbf{z})} \left[ \ell(\mathbf{z}) \right] = - \mathbb{E}_{q(\mathbf{z})} \left[ \pi_c \delta(\mathbf{z})_c \nabla^T_\mathbf{z} G(\mathbf{z}) \operatorname{diag}\left( \nabla_{\sigma_G}D \odot (\boldsymbol{\sigma}_G^{\circ 3}-\boldsymbol{\sigma}_G) \right) \nabla_\mathbf{z} G(\mathbf{z}) \right]
    \end{gather*}
where $\odot$ denotes element-wise multiplication of vectors and $\mathbf{x}^{\circ i}$ denotes the i-th Hadamard (element-wise) power of a vector $\mathbf{x}$.
\end{thm}
\begin{proof}
From the second-order identity of Lemma \ref{le:price_apndx}, we have the following:
\begin{align}
   \nabla_{\Sigma_c} \mathcal{L} &=  \nabla_{\Sigma_c} \mathbb{E}_{q(\mathbf{z})} \left[ \ell(\mathbf{z}) \right]
   = \nabla_{\Sigma_c} \sum_{i=1}^K p(i) \mathbb{E}_{q(\mathbf{z}|i)} \left[ \ell(\mathbf{z}) \right] \\
   &= p(c) \nabla_{\Sigma_c} \mathbb{E}_{q(\mathbf{z}|c)} \left[ \ell(\mathbf{z}) \right]
   =\frac{1}{2} p(c) \mathbb{E}_{q(\mathbf{z}|c)} \left[ \nabla^2_\mathbf{z} \ell(\mathbf{z}) \right] \\
   &= \frac{1}{2} \mathbb{E}_{q(\mathbf{z})} \left[ \frac{q(\mathbf{z}|c)p(c)}{q(\mathbf{z})} \nabla^2_\mathbf{z} \ell(\mathbf{z}) \right] \\
   &= \frac{1}{2} \mathbb{E}_{q(\mathbf{z})} \left[ \pi_c \delta(\mathbf{z})_c \nabla^2_\mathbf{z} \ell(\mathbf{z}) \right] \label{eq:36_apndx}
\end{align}
The Hessian of the sample loss with respect to $\mathbf{z}$ is given as follows:
\begin{align}
    \nabla^2_\mathbf{z} \ell(\mathbf{z}) &= \nabla_\mathbf{z} \left( - \nabla^T_\mathbf{z} D \right) 
    = \nabla_\mathbf{z} \left( - ( \nabla_G \boldsymbol{\sigma}_G \nabla_\mathbf{z} G )^T \nabla_{\boldsymbol{\sigma}_G} D \right)^T \\
    &= \nabla_\mathbf{z} \left( - (\nabla_{\boldsymbol{\sigma}_G} D)^T \nabla_G \boldsymbol{\sigma}_G \nabla_\mathbf{z} G \right) \\
    &= - \frac{\partial}{\partial \mathbf{z}} \begin{bmatrix} \frac{\partial D}{\partial (\sigma_G)_1} (\sigma'_G)_1 & \dots & \frac{\partial D}{\partial (\sigma_G)_{d_x}} (\sigma'_G)_{d_x} \end{bmatrix}
    \begin{bmatrix}
        \frac{\partial G_1}{\partial z_1} & \dots & \frac{\partial G_1}{\partial z_{d_z}} \\
        \vdots & \ddots & \vdots \\
        \frac{\partial G_{d_x}}{\partial z_1} & \dots & \frac{\partial G_{d_x}}{\partial z_{d_z}}
    \end{bmatrix} \\
    &= - \frac{\partial}{\partial \mathbf{z}} \begin{bmatrix} \sum_{i=1}^{d_x} \frac{\partial D}{\partial (\sigma_G)_i} (\sigma'_G)_i \frac{\partial G_i}{\partial z_1} & \dots & \sum_{i=1}^{d_x} \frac{\partial D}{\partial (\sigma_G)_i} (\sigma'_G)_i \frac{\partial G_i}{\partial z_{d_z}} 
    \end{bmatrix} \\
    &= - \begin{bmatrix}
    \frac{\partial}{\partial z_1} \left( \sum_{i=1}^{d_x} \frac{\partial D}{\partial (\sigma_G)_i} (\sigma'_G)_i \frac{\partial G_i}{\partial z_{1}} \right) & \dots & \frac{\partial}{\partial z_1} \left( \sum_{i=1}^{d_x} \frac{\partial D}{\partial (\sigma_G)_i} (\sigma'_G)_i \frac{\partial G_i}{\partial z_{d_z}} \right) \\
    \vdots & \ddots & \vdots \\
    \frac{\partial}{\partial z_{d_z}} \left( \sum_{i=1}^{d_x} \frac{\partial D}{\partial (\sigma_G)_i} (\sigma'_G)_i \frac{\partial G_i}{\partial z_{1}} \right) & \dots & \frac{\partial}{\partial z_{d_z}} \left( \sum_{i=1}^{d_x} \frac{\partial D}{\partial (\sigma_G)_i} (\sigma'_G)_i \frac{\partial G_i}{\partial z_{d_z}} \right)
    \end{bmatrix}
\end{align}
where $\boldsymbol{\sigma}'_G=\nabla_G \boldsymbol{\sigma}_G$. To simplify an element $(\nabla^2_\mathbf{z} \ell(\mathbf{z}))_{jk}= - \frac{\partial}{\partial z_{j}} \left( \sum_{i=1}^{d_x} \frac{\partial D}{\partial (\sigma_G)_i} (\sigma'_G)_i \frac{\partial G_i}{\partial z_{k}} \right)$ \\for arbitrary $j, k \in \{1, ..., d_z\}$, we have
\begin{align}
    \frac{\partial^2 D}{\partial z_j \partial (\sigma_G)_i} =&~ \frac{\partial}{\partial (\sigma_G)_i} \frac{\partial D}{\partial z_j} = \frac{\partial}{\partial (\sigma_G)_i} \left( \sum_{l=1}^{d_x} \frac{\partial D}{\partial (\sigma_G)_l} \frac{\partial (\sigma_G)_l}{\partial z_j} \right) \\
    =&~ \frac{\partial}{\partial (\sigma_G)_i} \left( \sum_{l=1}^{d_x} \frac{\partial D}{\partial (\sigma_G)_l} (\sigma'_G)_l \frac{\partial G_l}{\partial z_j} \right) \\
    =&~ \sum_{l=1}^{d_x} \cancel{\frac{\partial^2 D}{\partial (\sigma_G)_i \partial (\sigma_G)_l}} (\sigma'_G)_l \frac{\partial G_l}{\partial z_j} + \sum_{l=1}^{d_x} \frac{\partial D}{\partial (\sigma_G)_l} \frac{\partial (\sigma'_G)_l}{\partial (\sigma_G)_i} \frac{\partial G_l}{\partial z_j} \\&+ \sum_{l=1}^{d_x} \frac{\partial D}{\partial (\sigma_G)_l} (\sigma'_G)_l \frac{\partial^2 G_l}{\partial (\sigma_G)_i \partial z_j} \\
    =&~ \frac{\partial D}{\partial (\sigma_G)_i} (-2(\sigma_G)_i) \frac{\partial G_i}{\partial z_j} + \frac{\partial D}{\partial (\sigma_G)_i} (\sigma'_G)_i \frac{\partial}{\partial z_j}\left(\frac{1}{(\sigma'_G)_i}\right) \\
    =&~ \frac{\partial D}{\partial (\sigma_G)_i} (-2(\sigma_G)_i) \frac{\partial G_i}{\partial z_j} + \frac{\partial D}{\partial (\sigma_G)_i} \cancel{(\sigma'_G)_i} \frac{2(\sigma_G)_i}{\cancel{(\sigma'_G)_i}} \frac{\partial G_i}{\partial z_j} \\=&~0
\end{align}
\begin{align}
    \sum_{i=1}^{d_x} \frac{\partial D}{\partial (\sigma_G)_i} \frac{\partial (\sigma'_G)_i}{\partial z_{j}} \frac{\partial G_i}{\partial z_{k}} 
    =&~ \sum_{i=1}^{d_x} \frac{\partial D}{\partial (\sigma_G)_i} \frac{\partial (\sigma'_G)_i}{\partial G_i} \frac{\partial G_i}{\partial z_j} \frac{\partial G_i}{\partial z_{k}} \\
    =&~ \sum_{i=1}^{d_x} \frac{\partial D}{\partial (\sigma_G)_i} \left( 2(\sigma_G)^3_i-2(\sigma_G)_i \right) \frac{\partial G_i}{\partial z_j} \frac{\partial G_i}{\partial z_{k}} \\
    \sum_{i=1}^{d_x} \frac{\partial D}{\partial (\sigma_G)_i} (\sigma'_G)_i \cancel{\frac{\partial^2 G_i}{\partial z_{j} \partial z_{k}}} =&~ 0
\end{align}
Therefore, the simplified element is
\begin{align}
    (\nabla^2_\mathbf{z} \ell(\mathbf{z}))_{jk} = - \sum_{i=1}^{d_x} \frac{\partial D}{\partial (\sigma_G)_i} \left( 2(\sigma_G)^3_i-2(\sigma_G)_i \right) \frac{\partial G_i}{\partial z_j} \frac{\partial G_i}{\partial z_{k}} 
\end{align}
We now vectorize the expression of $\nabla^2_\mathbf{z} \ell(\mathbf{z})$ as follows:
\begin{align}
    \nabla^2_\mathbf{z} \ell(\mathbf{z}) = - 2 \left(\frac{\partial G}{\partial \mathbf{z}}\right)^T \operatorname{diag}\left( \frac{\partial D}{\partial \boldsymbol{\sigma}_G} \right) \operatorname{diag}\left( \boldsymbol{\sigma}_G^{\circ 3}-\boldsymbol{\sigma}_G \right) \left(\frac{\partial G}{\partial \mathbf{z}}\right)
\end{align}
Plugging this to Equation \ref{eq:36_apndx}, the following is obtained:
\begin{gather}
    \nabla_{\Sigma_c} \mathcal{L} = - \mathbb{E}_{q(\mathbf{z})} \left[ \pi_c \delta(\mathbf{z})_c \nabla^T_\mathbf{z} G \operatorname{diag}\left( \nabla_{\sigma_G}D \odot (\boldsymbol{\sigma}_G^{\circ 3}-\boldsymbol{\sigma}_G) \right) \nabla_\mathbf{z} G)  \right]
\end{gather}
\end{proof}

\newpage

\section{Methodological Details}
\subsection{Additive Angular Margin} \label{sec:arcface_apndx}
To enhance the discriminative power of U2C loss, we adopted the additive angular margin \citepapndx{arcface_apndx} as follows:
\begin{gather}
    \ell_\mathrm{U2C}(\mathbf{z}^i) = - \log \frac{\exp(s \cdot \cos (\theta_{ii} + m))}{\frac{1}{B} \{ \exp(s \cdot \cos (\theta_{ii} + m)) + \sum_{j \neq i} \exp(s \cdot \cos \theta_{ij}) \}} \label{eq:arcface}
\end{gather}
where $s$ denotes the feature scale, and $m$ is the angular margin. The feature scale $m$ and the coefficient of U2C loss $\lambda$ are linearly decayed to 0 during training, so that SLOGAN can focus more on the adversarial loss as training progresses.

\subsection{Cluster Assignment} \label{sec:cluster_assignment_apndx}
In Section 3.3, we chose $\cos \theta_{ic}$ as the critic function assuming that it is proportional to $\log p(c|\mathbf{x}_{g})$. If the real data distribution $p(\mathbf{x}_r)$ and the generator distribution $p(\mathbf{x}_g)$ are sufficiently similar via adversarial learning, the cosine similarity between $E(\mathbf{x}_r)$ and $\boldsymbol{\mu}_c$ can also be considered proportional to $\log p(c|\mathbf{x}_r)$. Therefore, for real data, we obtain the probability for each cluster as follows:
\begin{gather}
    \hat{p}(c|\mathbf{x}_r) = \frac{\exp(\cos \theta_{c})}{\sum_{k=1}^K \exp(\cos \theta_{k})}
\end{gather}
where $\cos \theta_{k} = E(\mathbf{x}_r) \cdot \boldsymbol{\mu}_{k} / \|E(\mathbf{x}_r)\| \|\boldsymbol{\mu}_{k}\|$ is the cosine similarity between $E(\mathbf{x}_r)$ and $\boldsymbol{\mu}_{k}$. The data can then be assigned to the cluster with the highest probability (i.e., $\operatorname*{argmax}_{c} \hat{p}(c|\mathbf{x}_r)$).

\subsection{Attribute Manipulation} \label{sec:attribute_manipulation_apndx}
We utilized mixup \citepapndx{mixup_apndx} to make the best use of a small amount of probe data when manipulating attributes. Algorithm \ref{alg:mixup} describes the procedure for using mixup for attribute manipulation when $K=2$. We applied the same feature scale and angular margin to $\mathcal{L}_\mathrm{m}$, as shown in Equation \ref{eq:arcface}. The number of iterations for the mixup ($T$) was set to five.
\newcommand{\algrule}[1][.2pt]{\par\vskip.2\baselineskip\hrule height #1\par\vskip.2\baselineskip}
\begin{algorithm}[h!]
    \caption{Attribute manipulation}
    \label{alg:mixup}
\begin{algorithmic}
    \STATE Initialize probe data with the desired attribute $\mathbf{X}_{c=1} \leftarrow \{\mathbf{x}^i_{c=1}\}_{i=1}^M$ and $\bar{\mathbf{X}}_{c=1} \leftarrow \{\mathbf{x}^i_{c=1}\}_{i=1}^M$
    \STATE Initialize probe data without the desired attribute $\mathbf{X}_{c=0} \leftarrow \{\mathbf{x}^i_{c=0}\}_{i=1}^M$ and $\bar{\mathbf{X}}_{c=0} \leftarrow \{\mathbf{x}^i_{c=0}\}_{i=1}^M$
    \FOR{each mixup iteration $t$ in $\{1, ..., T\}$}
    \STATE $\bar{\mathbf{X}}_{c=1} \leftarrow \bar{\mathbf{X}}_{c=1} \cup \textsc{Mixup}(\mathbf{X}_{c=1}, \textsc{Permute}(\mathbf{X}_{c=1}))$
    \STATE $\bar{\mathbf{X}}_{c=0} \leftarrow \bar{\mathbf{X}}_{c=0} \cup \textsc{Mixup}(\mathbf{X}_{c=0}, \textsc{Permute}(\mathbf{X}_{c=0}))$
    \ENDFOR
    \FOR{each augmented data index $j$ in $\{1, ..., M(T+1)\}$}
    \STATE $\cos \theta^{j}_{00} \leftarrow E(\bar{\mathbf{x}}_{c=0}^j) \cdot \boldsymbol{\mu}_0 / \|E(\bar{\mathbf{x}}_{c=0}^j)\| \|\boldsymbol{\mu}_0\|$
    \STATE $\cos \theta^{j}_{01} \leftarrow E(\bar{\mathbf{x}}_{c=0}^j) \cdot \boldsymbol{\mu}_1 / \|E(\bar{\mathbf{x}}_{c=0}^j)\| \|\boldsymbol{\mu}_1\|$
    \STATE $\cos \theta^{j}_{10} \leftarrow E(\bar{\mathbf{x}}_{c=1}^j) \cdot \boldsymbol{\mu}_0 / \|E(\bar{\mathbf{x}}_{c=1}^j)\| \|\boldsymbol{\mu}_0\|$
    \STATE $\cos \theta^{j}_{11} \leftarrow E(\bar{\mathbf{x}}_{c=1}^j) \cdot \boldsymbol{\mu}_1 / \|E(\bar{\mathbf{x}}_{c=1}^j)\| \|\boldsymbol{\mu}_1\|$
    \ENDFOR
    \STATE $\mathcal{L}_\mathrm{m} = - \frac{1}{M(T+1)} \sum_{j=1}^{M(T+1)} \log \frac{\exp(s \cdot \cos (\theta^{j}_{00}+m)) + \exp(s \cdot \cos (\theta^{j}_{11}+m))}{\exp(s \cdot \cos (\theta^{j}_{00}+m)) + \exp(s \cdot \cos \theta^{j}_{01}) + \exp(s \cdot \cos \theta^{j}_{10}) + \exp(s \cdot \cos (\theta^{j}_{11}+m))}$
    \STATE Minimize $\mathcal{L}^{p}$ with respect to $E$, $G$, and $\boldsymbol{\mu}$
\end{algorithmic}
\end{algorithm}

\subsection{SimCLR} \label{sec:simclr_apndx}
For the CIFAR and CelebA datasets, we used the SimCLR loss \citepapndx{simclr_apndx} for the encoder. We applied color, translation, and cutout transformations to the generated data using DiffAugment\footnote{\url{https://github.com/mit-han-lab/data-efficient-gans}} \citepapndx{diffaugment_apndx}. The SimCLR loss is calculated using the generated data $\mathbf{x}_g^i$ and augmentation $A$ as follows:
\begin{gather}
    \ell_{\mathrm{SimCLR}}(\mathbf{z}^i) = -\log \frac{\exp(E(\mathbf{x}_g^i) \cdot E(A(\mathbf{x}_g^i))~/~\|E(\mathbf{x}_g^i) \| \|E(A(\mathbf{x}_g^i)) \|)}{\sum_{j=1}^B \exp(E(\mathbf{x}_g^i) \cdot E(A(\mathbf{x}_g^j))~/~\|E(\mathbf{x}_g^i) \| \|E(A(\mathbf{x}_g^j)) \|)}
\end{gather}
where $\mathbf{x}_g^i=G(\mathbf{z}^i)$. The encoder $E$ is trained to minimize $\frac{1}{B} \sum_{i=1}^B \left( \ell_\mathrm{adv} (\mathbf{z}^i) + \ell_\mathrm{SimCLR} (\mathbf{z}^i) \right.$ $\left.+ \lambda \ell_\mathrm{U2C} (\mathbf{z}^i) \right)$.

\subsection{DeLiGAN+} \label{sec:deligan+_apndx}
Among the existing unsupervised conditional GANs, DeLiGAN lacks an encoder network. Therefore, for a fair comparison, we added an encoder network and named it DeLiGAN+. We set the output dimension of the encoder to equal the number of mixture components of the latent distribution. For the $i$-th example in the batch, when the $c_i$-th mixture component of the latent distribution is selected, DeLiGAN+ is learned through the following objective:
\begin{gather}
    \min_{G,E,\mu_c,\sigma_c} \max_D \frac{1}{B} \sum_{i=1}^B \left[ D(\mathbf{x}^i) - D(G(\mathbf{z}^i, \mathbf{c}_i)) - \lambda_{CE} ~ \mathbf{c}_i^T \log E(G(\mathbf{z}^i, \mathbf{c}_i)) \right]
\end{gather}
where $\mathbf{c}_i$ is the one-hot vector corresponding to $c_i$ and $\lambda_{CE}$ is the coefficient of the cross entropy loss. We set $\lambda_{CE}$ to 10 in the experiments.

\subsection{Evaluation Metric}
\paragraph{Cluster assignment} We do not use clustering purity which is an evaluation metric for cluster assignment. To compute the clustering purity, the most frequent class in the cluster is obtained, and the ratio of the data points belonging to the class is calculated. However, if the attributes in the data are imbalanced, multiple clusters can be assigned to a single class in duplicate, and this high clustering purity misleads the results. Therefore, we utilized the normalized mutual information (NMI) implemented in scikit-learn\footnote{\url{https://github.com/scikit-learn/scikit-learn/blob/15a949460/sklearn/metrics/cluster/_supervised.py}}.

\paragraph{Unconditional generation} 
FID has the advantage of considering not only sample quality but also diversity, whereas Inception score (IS) cannot assess the diversity properly because IS does not compare generated samples with real samples \citepapndx{shmelkov2018good_apndx}. Therefore, we used FID as the evaluation metric for unsupervised generation.

\paragraph{Unsupervised conditional generation} \label{sec:icfid_apndx}
If attributes in data are severely imbalanced, FID does not increase (deteriorate) considerably even if the model does not generate data containing the minority attributes. Therefore, the FID cannot accurately measure the unsupervised conditional generation performance for data with severely imbalanced attributes. We introduce ICFID to evaluate the performance of unsupervised conditional generation. When calculating ICFID, multiple clusters cannot be assigned to a single class in duplicate. Therefore, if data of a single class are generated from multiple discrete latent variables or modes, the model shows high (bad) ICFID.

\newpage

\section{Implementation Details} \label{sec:implementation_details_apndx}

\subsection{General Settings and Environments}
For simplicity, we denote the learning rate of G as $\eta$ and the learning rate of $\boldsymbol{\Sigma}$ as $\gamma$. Throughout the experiments, we set the learning rate of $E$ to $\eta$, and $D$ to $4\eta$ using the two-timescale update rule (TTUR) \citepapndx{ttur_apndx}. We set the learning rate of $\boldsymbol{\mu}$ to $10\gamma$, and the learning rate of $\boldsymbol{\rho}$ to $\gamma$. We set $B$ to 64 and the number of training steps to 100k. To stabilize discriminator learning, we used 
Lipschitz penalty \citepapndx{wgan-lp_apndx} for the synthetic, MNIST, FMNIST, and 10x\_73k datasets, and adversarial Lipschitz regularization \citepapndx{wgan-alp_apndx} for the CIFAR-10 and CelebA datasets. We repeated each experiment 3 times and reported the means and standard deviations of model performances. Hyperparameters are determined by a grid search. We used the Adam optimizer \citepapndx{adam_apndx} for training $D$, $G$, and $E$, and a gradient descent optimizer for training $\boldsymbol{\mu}$, $\boldsymbol{\Sigma}$, and $\boldsymbol{\rho}$. The experiments herein were conducted on a machine equipped with an Intel Xeon Gold 6242 CPU and an NVIDIA Quadro RTX 8000 GPU. The code is implemented in Python 3.7 and Tensorflow 1.14 \citepapndx{tensorflow_apndx}.

\subsection{Synthetic Dataset}
For the synthetic dataset, we first set the mean of eight 2-dimensional Gaussian distributions as $(0,2)$, $(\sqrt{2},\sqrt{2})$, $(2,0)$, $(\sqrt{2},-\sqrt{2})$, $(0,-2)$, $(-\sqrt{2},-\sqrt{2})$, $(-2,0)$, and $(-\sqrt{2},\sqrt{2})$, and the variance as $0.01I$. In Figure \ref{fig:toy_variance}, we also set the variances as $0.05I$ and $0.1I$. The number of data sampled from the Gaussian distributions was set to 5,000, 5,000, 5,000, 5,000, 15,000, 15,000, 15,000, and 15,000. We scaled a total of 80,000 data points to a range between -1 and 1. Table~\ref{tab:arc_synthetic} shows the network architectures of SLOGAN used for the synthetic dataset. Linear $n$ denotes a fully-connected layer with $n$ output units. BN and SN denote batch normalization and spectral normalization, respectively. LReLU denotes the leaky ReLU. We set $\lambda=4$, $\eta=0.001$, $\gamma=0.01$, $s=2$, and $m=0.5$.

\begin{table}[h!]
\centering
\caption{SLOGAN architecture used for the synthetic dataset}
\label{tab:arc_synthetic}
{\resizebox{0.75\columnwidth}{!}
{\begin{tabular}{lll}
    \toprule
    \multicolumn{1}{c}{$G$} & \multicolumn{1}{c}{$D$} & \multicolumn{1}{c}{$E$} \\
    \midrule
    $\mathbf{z} \in \mathbb{R}^{64}$ & $\mathbf{x} \in \mathbb{R}^2$ & $\mathbf{x} \in \mathbb{R}^2$ \\
    Linear 128 + BN + ReLU & Linear 128 + LReLU & Linear 128 + SN + LReLU \\
    Linear 128 + BN + ReLU & Linear 128 + LReLU & Linear 128 + SN + LReLU \\
    Linear 2 + Tanh & Linear 1 & Linear 64 + SN \\
    \bottomrule
\end{tabular}}}
\end{table}

\subsection{MNIST and Fashion-MNIST Datasets}
The MNIST dataset \citepapndx{mnist_apndx} consists of handwritten digits, and the Fashion-MNIST (FMNIST) dataset \citepapndx{fmnist_apndx} is comprised of fashion products. Both the MNIST and Fashion-MNIST datasets have 60,000 training and 10,000 test 28$\times$28 grayscale images. Each pixel was scaled to a range of 0$-$1. The datasets consist of 10 classes, and the number of data points per class is balanced. Table~\ref{tab:arc_mnist} shows the network architectures of SLOGAN used for the MNIST and FMNIST datasets. Conv $k \times k$, $s$, $n$ denotes a convolutional network with $n$ feature maps, filter size $k \times k$, and stride $s$. Deconv $k \times k$, $s$, $n$ denotes a deconvolutional network with $n$ feature maps, filter size $k \times k$, and stride $s$. For the MNIST dataset, we set $\lambda=10$, $\eta=0.0001$, $\gamma=0.002$, $s=8$, and $m=0.5$. For MNIST-2, we set $\lambda=4$, $\eta=0.0001$, $\gamma=0.002$, $s=4$, and $m=0.5$. For the FMNIST dataset, we set $\lambda=10$, $\eta=0.0001$, $\gamma=0.001$, $s=1$, and $m=0$. For FMNIST-5, we set $\lambda=1$, $\eta=0.0002$, $\gamma=0.004$, $s=4$, and $m=0.5$.

\begin{table}[h!]
\centering
\caption{SLOGAN architecture used for the MNIST and FMNIST datasets}
\label{tab:arc_mnist}
{\resizebox{0.9\columnwidth}{!}
{\begin{tabular}{lll}
    \toprule
    \multicolumn{1}{c}{$G$} & \multicolumn{1}{c}{$D$} & \multicolumn{1}{c}{$E$} \\
    \midrule
    $\mathbf{z} \in \mathbb{R}^{1\times1\times64}$ & $\mathbf{x} \in \mathbb{R}^{28\times28\times1}$ & $\mathbf{x} \in \mathbb{R}^{28\times28\times1}$ \\
    Deconv 1$\times$1, 1, 1024 + BN + ReLU & Conv 4$\times$4, 2, 64 + LReLU & Conv 4$\times$4, 2, 64 + LReLU \\
    Deconv 7$\times$7, 1, 128 + BN + ReLU & Conv 4$\times$4, 2, 64 + LReLU & Conv 4$\times$4, 2, 64 + LReLU \\
    Deconv 4$\times$4, 2, 64 + BN + ReLU & Conv 7$\times$7, 1, 1024 + LReLU & Conv 7$\times$7, 1, 1024 + LReLU \\
    Deconv 4$\times$4, 2, 1 + Sigmoid & Conv 1$\times$1, 1, 1 & Conv 1$\times$1, 1, 64 \\
    \bottomrule
\end{tabular}}}
\end{table}

\subsection{10x\_73k Dataset}
The 10x\_73k dataset \citepapndx{10x73k_apndx} consists of 73,233 720-dimensional vectors, which are obtained from RNA transcript counts, and has eight cell types (classes). The number of data points per cell type is 10,085, 2,612, 9,232, 8,385, 10,224, 11,953, 10,479, and 10,263. We converted each element to logscale (i.e., $\log_2 (x+1)$) and scaled each element to a range between 0 and 1. Table~\ref{tab:arc_10x} shows the network architectures of SLOGAN used for the 10x\_73k dataset. We set $\lambda=10$, $\eta=0.0001$, $\gamma=0.004$, $s=4$, and $m=0$.

\begin{table}[h!]
\centering
\caption{SLOGAN architecture used for the 10x\_73k dataset}
\label{tab:arc_10x}
{\resizebox{0.7\columnwidth}{!}
{\begin{tabular}{lll}
    \toprule
    \multicolumn{1}{c}{$G$} & \multicolumn{1}{c}{$D$} & \multicolumn{1}{c}{$E$} \\
    \midrule
    $\mathbf{z} \in \mathbb{R}^{64}$ & $\mathbf{x} \in \mathbb{R}^2$ & $\mathbf{x} \in \mathbb{R}^2$ \\
    Linear 256 + LReLU & Linear 256 + LReLU & Linear 256 + SN + LReLU \\
    Linear 256 + LReLU & Linear 256 + LReLU & Linear 256 + SN + LReLU \\
    Linear 720 & Linear 1 & Linear 64 + SN \\
    \bottomrule
\end{tabular}}}
\end{table}

\subsection{CIFAR-10 Dataset}
The CIFAR-10 \citepapndx{cifar10_apndx} dataset comprises 50,000 training and 10,000 test 32$\times$32 color images. Each pixel was scaled to a range of -1 to 1. The number of data points per class is balanced. Figure \ref{fig:resblock_apndx} and Table \ref{tab:arc_cifar} show the network architectures of residual blocks and SLOGAN used for the CIFAR datasets. AvgPool and GlobalAvgPool denote the average pooling and global average pooling layers, respectively. For the CIFAR-10, CIFAR-2, CIFAR-2 (7:3), and CIFAR-2 (9:1) datasets, we set $\lambda=1$, $\eta=0.0001$, $\gamma=0.002$, $s=4$, and $m=0.5$.

\begin{figure*}[h!]
\centering
\includegraphics[width=1\textwidth]{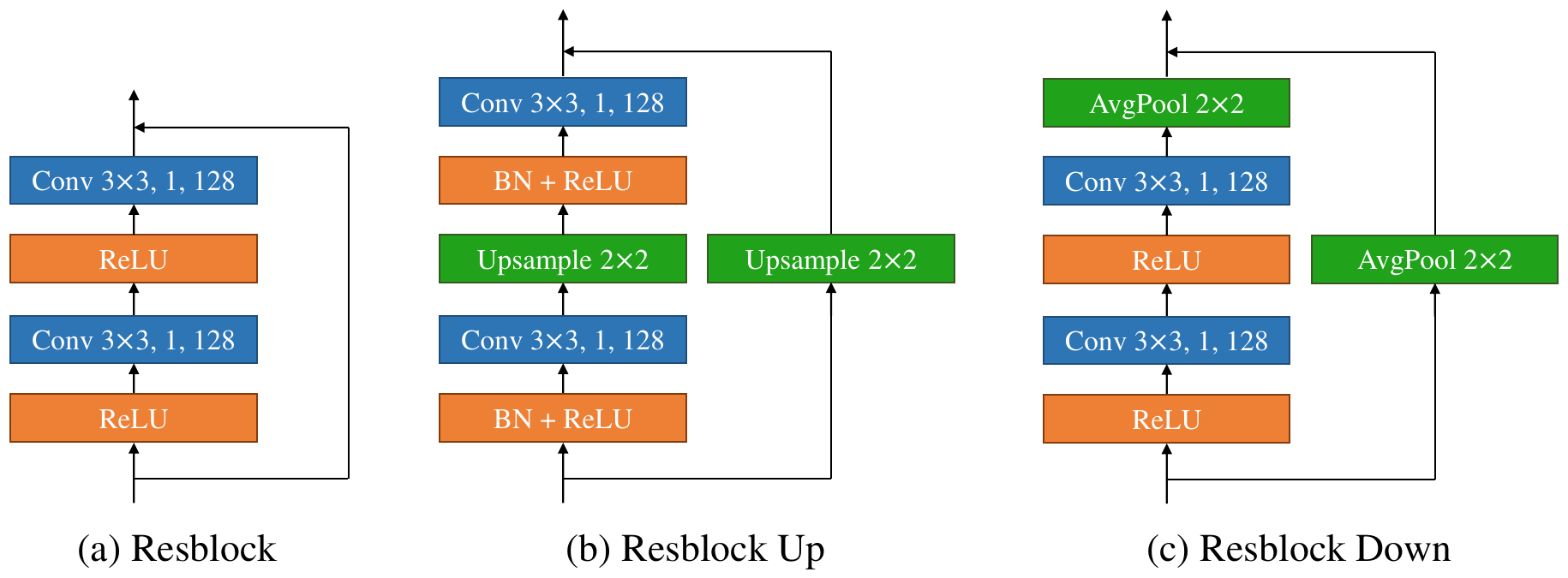} 
\caption{Resblock architectures used for colored image datasets.}
\label{fig:resblock_apndx}
\end{figure*}

\begin{table}[h!]
\centering
\caption{SLOGAN architecture used for the CIFAR datasets}
\label{tab:arc_cifar}
{\resizebox{0.8\columnwidth}{!}
{\begin{tabular}{lll}
    \toprule
    \multicolumn{1}{c}{$G$} & \multicolumn{1}{c}{$D$} & \multicolumn{1}{c}{$E$} \\
    \midrule
    $\mathbf{z} \in \mathbb{R}^{128}$ & $\mathbf{x} \in \mathbb{R}^{32\times32\times3}$ & $\mathbf{x} \in \mathbb{R}^{32\times32\times3}$ \\
    Linear 2048 + Reshape 4, 4, 128 & Resblock Down & Resblock Down \\
    Resblock Up & Resblock Down & Resblock Down \\
    Resblock Up & Resblock & Resblock \\
    Resblock Up & Resblock & Resblock \\
    BN + ReLU & ReLU + GlobalAvgPool & ReLU + GlobalAvgPool \\
    Conv 3$\times$3, 1, 3 + Tanh & Linear 1 & Linear 128 \\
    \bottomrule
\end{tabular}}}
\end{table}

\subsection{CelebA Dataset}
The CelebA dataset \citepapndx{celeba_apndx} consists of 202,599 face attributes. We cropped the face part of each image to 140$\times$140 pixels, resized it to 64$\times$64 pixels, and scaled it to a range between -1 and 1. The imbalanced ratio is different for each attribute, and the imbalanced ratios of male and eyeglasses used in the experiment are 1:1 and 14:1, respectively. Table \ref{tab:arc_celeba} lists the network architectures of SLOGAN used for the CelebA dataset. We set $\lambda=1$, $\eta=0.0002$, $\gamma=0.0006$, $s=4$, and $m=0.5$.

\begin{table}[h!]
\centering
\caption{SLOGAN architecture used for the CelebA dataset}
\label{tab:arc_celeba}
{\resizebox{0.8\columnwidth}{!}
{\begin{tabular}{lll}
    \toprule
    \multicolumn{1}{c}{$G$} & \multicolumn{1}{c}{$D$} & \multicolumn{1}{c}{$E$} \\
    \midrule
    $\mathbf{z} \in \mathbb{R}^{128}$ & $\mathbf{x} \in \mathbb{R}^{64\times64\times3}$ & $\mathbf{x} \in \mathbb{R}^{64\times64\times3}$ \\
    Linear 8192 + Reshape 8, 8, 128 & Resblock Down & Resblock Down \\
    Resblock Up & Resblock Down & Resblock Down \\
    Resblock Up & Resblock Down & Resblock Down \\
    Resblock Up & Resblock & Resblock \\
    BN + ReLU & ReLU + GlobalAvgPool & ReLU + GlobalAvgPool \\
    Conv 3$\times$3, 1, 3 + Tanh & Linear 1 & Linear 128 \\
    \bottomrule
\end{tabular}}}
\end{table}

\subsection{CelebA-HQ Dataset}
The CelebA-HQ dataset \citepapndx{celeba-hq} consists of 30,000 face attributes. We resized each image to 128$\times$128 and 256$\times$256 pixels, and scaled it to a range between -1 and 1. The imbalance ratio of male used in the experiment was 1.7:1. We used StyleGAN2 \citepapndx{stylegan2} architecture with DiffAugment\footnote{\url{https://github.com/mit-han-lab/data-efficient-gans/tree/master/DiffAugment-stylegan2}} and applied implicit reparameterization to the input space of the mapping network. We set $\lambda=1$, $\eta=0.002$, $\gamma=0.0006$, $s=1$, and $m=0$.

\subsection{AFHQ Dataset}
The AFHQ dataset \citepapndx{afhq} consists of 15,000 high-quality animal faces. We used cats and dogs, and there are about 5,000 images each in the dataset. We resized each image to 256$\times$256 pixels, and scaled it to a range between -1 and 1. We set the imbalance ratios of cats and dogs to 1:1, 1:2, and 1:5 by reducing the number of cats in the training dataset. We used the same model architecture and hyperparameters as for the CelebA-HQ dataset.

\subsection{Code Availability}
Code is available at \url{https://github.com/shinyflight/SLOGAN}

\newpage
\bibliographystyleapndx{iclr2022_conference}
\bibliographyapndx{appendix}

\end{document}